\documentclass{article}

\usepackage{microtype}
\usepackage{graphicx}
\usepackage{subfigure}
\usepackage{booktabs} %

\usepackage{enumitem}
\usepackage{hyperref}
\usepackage[dvipsnames]{xcolor}
\usepackage{xcolor-material, colortbl, soul, wrapfig}
\usepackage{nicefrac, xspace, multirow, makecell}

\usepackage{amsmath}
\usepackage{amssymb}
\usepackage{mathtools}
\usepackage{amsthm}
\usepackage{thmtools} %
\usepackage[capitalize,noabbrev]{cleveref}
\usepackage{xspace}

\usepackage{tikz}

\usepackage{array}
\newcolumntype{C}[1]{>{\centering\arraybackslash}p{#1}}

\usepackage{makecell}

\usepackage{pifont}%
\newcommand{\cmark}{\ding{51}}%
\newcommand{\xmark}{\ding{55}}%

\usepackage[colorinlistoftodos,bordercolor=orange,backgroundcolor=orange!20,linecolor=orange,textsize=scriptsize]{todonotes}

\usepackage{tikz}
\definecolor{navyblue}{RGB}{0,35,90}

\newcommand*\circled[1]{%
  \tikz[baseline=-0.6ex]{%
    \node[
      circle,
      fill=navyblue,
      text=yellow,
      inner sep=1.2pt,
      minimum size=1.3em,
      font=\scriptsize\bfseries
    ] {#1};%
  }%
}

\theoremstyle{plain}
\newtheorem{theorem}{Theorem}[section]

\newtheorem{lemma}[theorem]{Lemma}
\newtheorem{corollary}[theorem]{Corollary}
\theoremstyle{definition}
\newtheorem{definition}[theorem]{Definition}
\newtheorem{assumption}[theorem]{Assumption}
\theoremstyle{remark}
\newtheorem{remark}[theorem]{Remark}
\definecolor{Gray}{gray}{0.9}
\usepackage{algorithm}
\usepackage{algorithmic}
\usepackage{tcolorbox}

\renewcommand{\algorithmiccomment}[1]{\hfill$\triangleright$ #1}

\newcommand{\DPFunction}[2]{%
    \STATE \textbf{function} \textsc{#1}$(#2)$%
}
\newcommand{\DPEndFunction}{%
    \STATE \textbf{end function}%
}
\newcommand{\DPReturn}[1]{%
    \STATE \textbf{return} $#1$%
}
\newcommand{\DPCall}[2]{%
    \operatorname{\textsc{#1}}\!\left(#2\right)%
}

\definecolor{algPhase}{RGB}{30,70,120} %

\newcommand{\ALGPhase}[1]{%
  \item[]\vspace{0.25em}%
  {\footnotesize\bfseries\scshape\color{algPhase}#1}%
  \hfill{\color{algPhase!60}\rule{0.42\linewidth}{0.35pt}}%
  \vspace{0.05em}%
}

\newcommand{\ALGInput}[1]{%
    \item[]\textbf{Input:} #1%
}

\renewcommand{\algorithmiccomment}[1]{\hfill$\triangleright$ #1}

\newcommand{\tool}{\textsc{FlexRank}\xspace}

\makeatletter
\DeclareRobustCommand\onedot{\futurelet\@let@token\@onedot}
\def\@onedot{\ifx\@let@token.\else.\null\fi\xspace}

\def\eg{\emph{e.g}\onedot} 
\def\ie{\emph{i.e}\onedot}

\def\wrt{w.r.t\onedot} 

\makeatother

\makeatletter
 \def\SOUL@hlpreamble{%
 \setul{}{3.2ex}%
 \let\SOUL@stcolor\SOUL@hlcolor
 \SOUL@stpreamble
 }
\makeatother

\crefname{equation}{Eq.}{Eq.}
\Crefname{equation}{Equation}{Equations}
\creflabelformat{equation}{(#2#1#3)}

\crefname{figure}{Fig.}{Figs.}
\Crefname{figure}{Figure}{Figures}

\crefname{table}{Tab.}{Tabs.}
\Crefname{table}{Table}{Tables}

\crefname{chapter}{Chap.}{Chaps.}
\Crefname{chapter}{Chapter}{Chapters}

\crefname{section}{Sec.}{Secs.}
\Crefname{section}{Section}{Sections}

\crefname{subsection}{Sec.}{Secs.}
\Crefname{subsection}{Section}{Sections}

\crefname{subsubsection}{Sec.}{Secs.}
\Crefname{subsubsection}{Section}{Sections}

\crefname{prob}{Prob.}{Probs.}
\Crefname{prob}{Problem}{Problems}

\crefname{property}{Prop.}{Props.}
\Crefname{property}{Property}{Properties}

\crefname{constr}{Constraint}{Constraints} %
\Crefname{constr}{Constraint}{Constraints}

\crefname{algocf}{Algorithm}{Algorithms}
\Crefname{algocf}{Algorithm}{Algorithms}

\crefname{algorithm}{Algorithm}{Algorithms}
\Crefname{algorithm}{Algorithm}{Algorithms}

\crefname{hyp}{Hyp.}{Hyps.}
\Crefname{hyp}{Hypothesis}{Hypothesis}

\crefname{assumption}{Assumption}{Assumptions}
\Crefname{assumption}{Assumption}{Assumptions}

\crefname{lem}{Lem.}{Lems.}
\Crefname{lem}{Lemma}{Lemma}

\crefname{prop}{Prop.}{Props.}
\Crefname{prop}{Proposition}{Propositions}

\crefname{theorem}{Thm.}{Thms.}
\Crefname{theorem}{Thm.}{Thms.}

\crefname{definition}{Def.}{Defs.}
\Crefname{definition}{Def.}{Defs.}

\crefname{appendix}{App.}{Apps.}
\Crefname{appendix}{App.}{Apps.}

\usepackage{xurl}

\makeatletter
\let\latexaddcontentsline\addcontentsline
\makeatother

\usepackage[accepted]{icml2026}
\usepackage{etoc}

\icmltitlerunning{FlexRank: Nested Low-Rank Knowledge Decomposition}

\begin{document}

\twocolumn[
\icmltitle{FlexRank: Nested Low-Rank Knowledge Decomposition for \\Adaptive Model Deployment}

\icmlsetsymbol{indep}{$\dagger$}

\begin{icmlauthorlist}
\icmlauthor{Riccardo Zaccone}{tor,mbz}
\icmlauthor{Stefanos Laskaridis}{amz,indep}
\icmlauthor{Marco Ciccone}{vec}
\icmlauthor{Samuel Horv\'{a}th}{mbz}
\end{icmlauthorlist}

\icmlaffiliation{amz}{Amazon Science, UK}
\icmlaffiliation{tor}{Polytechnic of Turin}
\icmlaffiliation{vec}{Vector Institute}
\icmlaffiliation{mbz}{Mohamed bin Zayed University of Artificial Intelligence}

\icmlcorrespondingauthor{Riccardo Zaccone}{riccardo.zaccone@polito.it}

\icmlkeywords{Large Language Models, Elastic Models, Efficient ML, NLP}

\vskip 0.3in
]

\begin{abstract}
  The growing scale of deep neural networks, encompassing large language models (LLMs) and vision transformers (ViTs), has made training from scratch prohibitively expensive and deployment increasingly costly.
These models are often used as computational monoliths with fixed cost, hindering adaptive deployment across different cost budgets.
We argue that nested components, ordered by importance, can be extracted from pretrained models and selectively activated within the available computational budget. To this end, our proposed \tool method leverages low-rank weight decomposition with nested, importance-based consolidation to extract submodels of increasing capabilities. Our approach enables a \textit{``train-once, deploy-everywhere''} paradigm offering a graceful trade-off between cost and performance without training from scratch for each budget --- advancing practical deployment of large models\footnote{Code available at \url{https://github.com/RickZack/FlexRank}}.

\end{abstract}

\printAffiliationsAndNotice{\textsuperscript{$\dagger$}Work done independently of Amazon.}

\vspace{-0.8cm}
\section{Introduction}
\label{submission}
\vspace{0.1cm}

\begin{figure*}[t]
    \centering
    \includegraphics[width=0.90\linewidth]{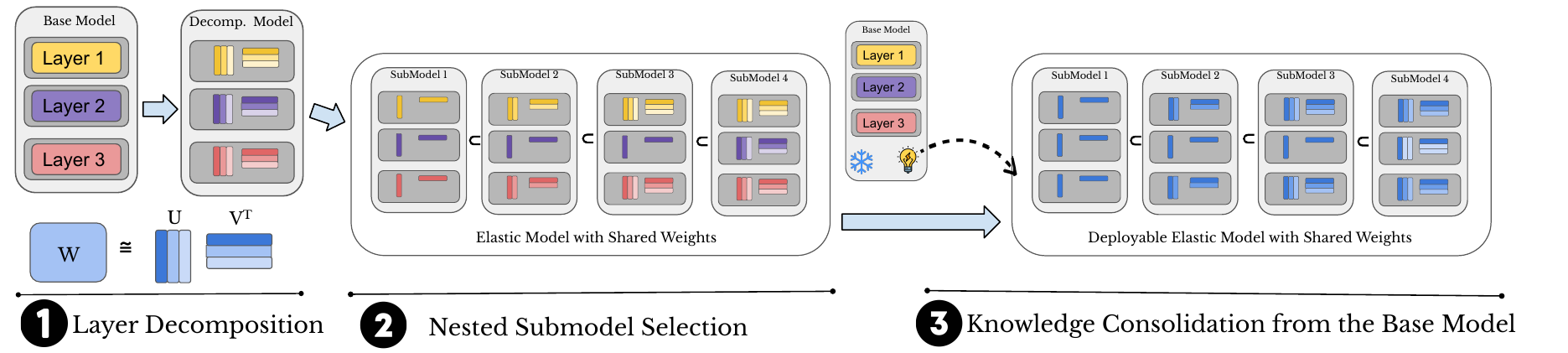}
    \caption{\tool takes as input a base model, which is first decomposed by factorizing each linear layer independently. Next, a global ordering is obtained via a dynamic programming subroutine that assumes additivity of errors across layers. This global ordering is then used to extract nested submodels of different sizes, which are stochastically refined through distillation from the base model.}
    \label{fig:tool_overview}
    \vspace{-0.5cm}
\end{figure*}
\vspace{-0.3cm}
Over recent years, the number of parameters and computational demands of modern networks have grown dramatically. Large Language~\cite{vaswani2017attention} and Vision~\cite{dosovitskiy2021an} Transformer models now contain billions of parameters and require vast training corpora and compute budgets~\citep{adler2024nemotron,team2025kimi,grattafiori2024llama,qiu2025gated}. As these models continue to scale, training from scratch has become feasible only for a small number of well-resourced institutions. This has prompted the broader community to reuse publicly released pre-trained models and develop increasingly sophisticated methods for adapting them to downstream tasks~\citep{hu2022lora,liu2024dora,wang2024lora,li2024mixlora,tastan2025loft}.

Most adaptation strategies fall under the umbrella of parameter-efficient fine-tuning (PEFT). These methods modify a small auxiliary set of parameters while keeping the original model largely frozen. Although PEFT substantially reduces training costs, it leaves the computational structure of the backbone network unchanged.
Consequently, model size and inference cost remain fixed, even when downstream applications could benefit from lighter or more variable configurations. This mismatch between adaptation flexibility and deployment rigidity becomes more pronounced when targeting environments with diverse hardware capabilities and different latency or memory budgets~\citep{laskaridis2024melting,wu2019machine,zheng2025review}.

A natural direction for reducing inference cost is to compress the model itself. Two families of techniques have become particularly prominent.
Quantization~\citep{lin2024awq,frantar2023optq,xiao2023smoothquant} reduces numerical precision to shrink memory footprint and improve throughput, but highest-quality results typically rely on quantization-aware training (QAT)~\citep{liu2025paretoq,chen-etal-2025-efficientqat,ma2024era}, which requires modifying the training pipeline. 
Sparsity-based approaches~\citep{sun2024wanda,frantar2023sparsegpt,kurtic2023ziplm}, instead, prune weights or induce structure via penalties, such as $\ell_1$ regularization. However, these methods also require retraining the full model to maintain performance and often depend on hardware, kernel support, or specific sparsity patterns to translate sparsity into real speedups~\citep{DBLP:journals/corr/abs-2104-08378,pytorch_sparsity_2025}.

While highly effective, especially for memory-bound workloads, these approaches often fail to provide a flexible, smoothly varying set of model capacities, which are increasingly needed in heterogeneous deployment environments. For example, consumer device manufacturers typically offer a variety of platforms of varying capabilities that span across different tiers and generations, thus offering different compute and capacity dynamics. Additionally, model inputs vary in difficulty and can be handled by smaller model variants without compromising accuracy. However, training and maintaining submodels of different sizes is costly and cumbersome. This motivates exploring alternative mechanisms for inducing  \textit{model elasticity}.

Most current techniques construct elasticity through mechanisms such as joint training of several model sizes via architectural slicing of a pretrained network~\citep{horvath2021fjord,cai2019once,devvrit2024matformer} or dynamic routing~\citep{cai2024flextron,cai2025llamaflex}. Yet the way these methods extract submodels frequently limits the quality of the resulting efficiency–performance trade-offs.
First, many such methods predetermine a small set of submodel sizes, often chosen uniformly or heuristically, rather than discovering the configurations that truly lie on the Pareto frontier.
Second, it is usually assumed that the pretrained model already contains viable subnetworks. Yet, these subnetworks may have never been encouraged to be performant during training and often compete for representation capacity.

To overcome these limitations, we propose \tool, a framework that decomposes the layers of any pretrained model into importance--ordered directional components and then efficiently searches and refines nested submodels that approximate the Pareto frontier.
This two-stage process enables principled, nested elasticity without training multiple models from scratch or relying on architectural heuristics.

\textbf{Contributions.}
We make the following contributions:
\begin{itemize}[leftmargin=*, topsep=0pt, itemsep=-1ex, partopsep=1ex, parsep=1ex]
    \item We propose \tool, a rank-based elastic method that decomposes a pretrained model into nested, importance-ordered submodels within a single set of weights.

    \item We show that nested submodel training is key to Pareto-efficient low-rank elasticity, and combine this insight with a dynamic-programming procedure (DP) for selecting near-optimal rank configurations across budgets.

    \item We introduce a \textit{``Gauge-Aligned Reparametrization''} (GAR), translating rank selection into practical inference savings, and extensively show that \textsc{FlexRank} improves accuracy--cost trade-offs across DNNs, ViTs, and LLMs.
\end{itemize}

\vspace{-0.2cm}
\section{Preliminaries}
We first formally define the notion of an \emph{elastic model} and discuss its desired properties. We then introduce the underlying objective of \emph{elastic training}.
\subsection{Elastic Models}
\label{sec:elastic_models}
Versatile deployment requires models that can adapt to varying hardware constraints without storing separate parameters for every possible configuration. We formalize this requirement as follows.
Let $\mathcal{D}$ denote a data distribution and $\mathcal{B}$ a set of computational budgets corresponding to realizable configurations, \ie{}, $\mathcal{B}=\{\beta_i\}_{i=1}^{N}$.  
A model is defined as a tuple $(f, \theta)$, where $f(\mathbf{d}; \theta)$ is a function parameterized by shared weights $\theta$ and $\mathbf{d} \sim \mathcal{D}$ denotes the input.

From a single set of shared parameters $\theta$, we derive a family of model realizations via $\mathcal{T}_\beta(\cdot)$, an algorithm-dependent transformation parameterized by the budget $\beta \in (0,1]$. We denote by $\mathcal{R}(\cdot)$ and $\mathcal{C}(\cdot)$ the performance and cost operators, respectively, subject to the constraint $\mathcal{C}(f(\mathbf{d}; \mathcal{T}_{\beta}(\theta))) \leq \beta$.
To illustrate, consider two common compression strategies. For sparsification, $\mathcal{B}$ represents sparsity ratios and $\mathcal{T}_\beta(\theta) = M_\beta \odot \theta$, where $M_\beta \in \{0, 1\}^d$ is a binary mask with density proportional to $\beta$. For quantization, $\mathcal{B}$ defines a set of relative bit-widths, and $\mathcal{T}_\beta$ is the corresponding quantization operator.

In this work, we focus on \textbf{low-rank approximations} of weight matrices. %
Specifically, each weight matrix $\{W_l \in \mathbb{R}^{m_l \times n_l}\} \subset \theta$ is factorized as $W_l = U_l {V_l}^\top$, with factors $U_l \in \mathbb{R}^{m_l \times r_l}$ and $V_l \in \mathbb{R}^{n_l \times r_l}$. The budget parameter $\beta$ controls the total number of preserved parameters via a transformation $\mathcal{T}_\beta$ that selects, for each layer, a subset of indices $\mathcal{S}_l \subseteq [r_l] = \{1, 2, \hdots, r_l\}$ and retains only the corresponding columns of $U_l$ and $V_l$. The subsets $\{\mathcal{S}_l\}$ are chosen across layers to satisfy a global budget constraint induced by $\beta$.
We discuss how standard neural network layers can be parameterized within this framework in \cref{sec:app:exp:layer_parametrization}.

More generally, $\mathcal{T}_\beta(\cdot)$ may be any transformation. Ideally, we would use the optimal transformation
\begin{align}
    \label{eq:t_star} 
    \textstyle
    \mathcal{T}^{\star}_\beta(\theta) \in \arg\min_{\mathcal{T}_\beta(\theta)} \mathbb{E}_{\mathbf{d} \sim \mathcal{D}} \left[\mathcal{R}\big(f(\mathbf{d}; \mathcal{T}_{\beta}(\theta))\big)\right].
\end{align}
However, solving \eqref{eq:t_star} is largely intractable, as it entails a combinatorial optimization problem, different for each $\theta$. Consequently, additional assumptions are required to render the problem tractable.
Ultimately, our goal is to find $\theta^{\star}$ that is Pareto optimal in the joint space of performance and cost.
\vspace{-0.5cm}
\begin{definition}[Pareto Elastic Model]
\label{def:pareto_elastic_model}
An elastic model $(f, \theta^{\star})$ is \textit{optimally elastic} if each configuration $f(\cdot; \mathcal{T}^{\star}_{\beta}(\theta))$ lies on the \textbf{Pareto front} $\mathcal{P}$ of the objective space $(\mathcal{R}, -\mathcal{C})$. Formally, the model $(f, \theta^{\star})$ is optimally elastic iff for each $\beta \in \mathcal{B}$, there exists no $\tilde{\theta}$ and $\tilde{\mathcal{T}}_{\beta}$ such that:
\begin{align*}
    \mathbb{E}_{\mathbf{d} \sim \mathcal{D}} \left[\mathcal{R}\big(f(\mathbf{d}; \tilde{\mathcal{T}}_{\beta}(\tilde{\theta}))\big)\right] > \mathbb{E}_{\mathbf{d} \sim \mathcal{D}} \left[\mathcal{R}\big(f(\mathbf{d}; \mathcal{T}^{\star}_{\beta}(\theta^{\star}))\big)\right].
\end{align*}
\end{definition}
Optimal elasticity characterizes an ideal weight-sharing regime in which a single parameter vector $\theta$ simultaneously encodes optimal representations for all budgets. This formulation decouples representation learning -- encoded in the shared parameters $\theta$ -- from budget-specific realization governed by the transformation $\mathcal{T}_\beta$. \emph{Achieving this property constitutes the \textbf{central challenge} addressed in this work.}
\vspace{-0.2cm}
\subsection{Training Elastic Models}
\label{sec:training_elastic}
\vspace{-0.1cm}
The Pareto elastic model is idealized and serves as a conceptual upper bound rather than an attainable objective; in practice, we therefore turn to a more tractable optimization formulation that approximates Pareto elasticity, defined as:
\begin{equation}
    \label{eq:elastic_opt_problem}
    \hat\theta^\star \in \arg\min_{\theta} \,\, \sum_{\beta_k \in \mathcal{B}} \alpha_k \mathbb{E}_{\mathbf{d} \sim \mathcal{D}} \left[ \mathcal{L}(f(\mathbf{d}; \mathcal{T}^{\star}_{\beta_k}(\theta)) \right],
\end{equation}
where $\mathcal{L}(\cdot)$ is a task-specific loss, and $\{\alpha_k > 0\}$ are coefficients prioritizing different budget regimes. Note that \eqref{eq:elastic_opt_problem} defines an implicit bilevel optimization problem, since 
the optimal transformation $\mathcal{T}^{\star}_{\beta}$ itself depends on $\theta$.
This problem is difficult to solve as gradient-based optimization cannot be directly applied due to the complexity of computing $\mathcal{T}^{\star}_{\beta_k}(\theta)$. Consequently, existing approaches typically either \textit{(i)}~optimize only the full model and then extract submodels from frozen parameters, or \textit{(ii)}~optimize multiple submodels for different budgets and select the best-performing ones post hoc. As we argue in \cref{sec:theory}, both strategies are suboptimal, motivating the need for a novel approach.

\vspace{-0.2cm}
\section{FlexRank}

We are now ready to introduce \tool, a principled framework for low-rank knowledge decomposition that enables elastic model scaling. As previously discussed (\cref{sec:training_elastic}), directly solving the resulting bilevel optimization is intractable for large models. We therefore focus on a practical and scalable setting in which a high-capacity pretrained base model is available, an assumption well aligned with modern deployment pipelines~\citep{adler2024nemotron,team2025kimi,grattafiori2024llama,qiu2025gated}.

Specifically, \tool takes as input a pre-trained network $(f,\theta)$ and a calibration dataset $\mathcal{Z}$~\footnote{In practice, this can be a representative subset of pretraining or downstream data (i.e.~similar datamix), at the scale of $10^3$ samples.}. Rather than jointly optimizing elastic submodels end-to-end, we leverage the base model to efficiently identify high-quality submodels that act as proxies for determining the transformation operator $\mathcal{T}^\star_\beta$. We make the key assumption that, although further optimization is required to obtain strong deployable models, the relative importance ordering induced by these submodels is preserved; that is, the optimal mask structure produced by $\mathcal{T}^\star_\beta$ is fixed, allowing subsequent optimization to focus solely on the parameters.

An overview of \tool is shown in \cref{fig:tool_overview,alg:flexrank}. 
The method comprises three stages. 
\circled{1} \textbf{\color{algPhase}Layer Decomposition:} Each layer of the base model is independently decomposed via singular value decompositions, yielding an optimal per-layer factorization of ordered, nested components (\cref{sec:nestedness}). \circled{2}  \textbf{\color{algPhase}Nested Submodel Search:} Assuming ranking preservation of solutions under an additive error probe across layers (see \cref{sec:app:dp_ranking}), we compute a global importance ordering by solving a dynamic program, yielding a collection of nested submodels across budgets. 
\mbox{\circled{3} \textbf{\color{algPhase}Knowledge Consolidation:}} As the decomposed submodels ignore cross-layer dependencies, we refine all configurations through knowledge distillation (KD) from the base model, yielding deployable elastic models with shared weights. 
\vspace{-0.25cm}
\begin{algorithm}[h]
\caption{\tool}
\label{alg:flexrank}
\small
\begin{algorithmic}[1]
\REQUIRE Pretrained model $f(\cdot;\theta_{\mathrm{orig}})$; calibration data $\mathcal{Z}$; training data $\mathcal{D}$; training budgets $\mathcal{B}=\{\beta_k\}_{k=1}^K$
\ENSURE Elastic parameters $\theta$ and ordered Pareto front $\mathcal{M}^\star$

\ALGPhase{Layer decomposition}
\FORALL{layers $l=1,\ldots,L$ with weight $W_l$}
    \STATE collect activations $X_l$ from $\mathcal{Z}$
    \STATE compute DataSVD factors $(U_l,V_l)$ by solving \cref{eq:datasvd_obj}
\ENDFOR
\STATE initialize $\theta\gets\{(U_l,V_l)\}_{l=1}^L$

\ALGPhase{Nested submodel search}
\FORALL{layers $l=1,\ldots,L$}
    \STATE $\mathcal{M}_l\gets\{\mathbf{m}_{r}:r\in\mathcal{U}(r_l,K)\}$ \COMMENT{truncate only layer $l$}
    \STATE $\mathcal{Q}_l\gets\{(\Delta c,\Delta e,r):\mathbf{m}_{r}\in\mathcal{M}_l\}$
    \STATE \textbf{where } $\Delta c=\mathcal{C}(f,\theta)-\mathcal{C}(f,\mathcal{T}_{\mathbf{m}_{r}}(\theta))$
    \STATE \phantom{\textbf{where }} $\Delta e=\mathcal{R}(f,\theta)-\mathcal{R}(f,\mathcal{T}_{\mathbf{m}_{r}}(\theta))$
\ENDFOR
\STATE $\mathcal{M}^\star\gets\DPCall{DPRankSelection}{\{\mathcal{Q}_l\}_{l=1}^L}$ \COMMENT{\cref{alg:dp_rank_selection}}

\ALGPhase{Knowledge consolidation}
\STATE $\widehat{\mathcal{M}}\gets\DPCall{SelectProfiles}{\mathcal{M}^\star,\mathcal{B}}$ \COMMENT{best budget profiles}
\FOR{training steps $t=1,\ldots,T$}
    \STATE sample $\mathbf{m}_t^\star\sim\widehat{\mathcal{M}}$ and minibatch $\mathbf{d}\sim\mathcal{D}$
    \STATE $\theta\leftarrow\textsc{Opt}\!\left(\theta,\nabla_\theta
    \mathcal{L}_{\mathrm{KD}}\!\left(f(\mathbf{d};\mathcal{T}_{\mathbf{m}_t^\star}(\theta)), f(\mathbf{d};\theta_{\mathrm{orig}})\right)\right)$
\ENDFOR
\DPReturn{\theta,\mathcal{M}^\star}
\ALGPhase{Deploy everywhere}
\ALGInput{model $\theta$; Pareto front $\mathcal{M}^\star$; deployment budget $\beta$}
\STATE $\{\mathbf{m}_\beta^\star\}\gets\DPCall{SelectProfiles}{\mathcal{M}^\star,\{\beta\}}$
\FORALL{factorized layers $l=1,\ldots,L$}
    \STATE $(\widehat U_l,\widetilde V_l)\gets\textsc{Gar}(U_l,V_l,(\mathbf{m}_\beta^\star)_l)$ \COMMENT{see \cref{eq:gar}}
\ENDFOR
\STATE ${\theta}_{\beta}\gets\{(\widehat U_l,\widetilde V_l)\}_{l=1}^{L}$
\DPReturn{{\theta}_{\beta}}
\end{algorithmic}
\end{algorithm}

\subsection{Layer Decomposition}
\label{sec:method:layer_decomp}
To initialize the shared decomposed parameters $\theta$, we first perform a layer-wise factorization of the pretrained weights $\{W_l\}$. We seek an initialization that preserves the functional behavior of the original model and admits a closed-form solution, which we refer to as \textit{DataSVD}. Specifically, we compute low-rank factors $U_l$ and $V_l$ by minimizing the output reconstruction error
\begin{equation}
    \label{eq:datasvd_obj}
     \min_{U_l, V_l} \; \mathbb{E}_{\mathbf{x}_l \sim \mathcal{X}_l}
     \left[\left\| (W_l - U_l {V_l}^\top)\mathbf{x}_l \right\|_2^2 \right],
\end{equation}
where $\mathbf{x}_l$ denotes the input activations to layer $l$ and $\mathcal{X}_l$ is the corresponding distribution.
In practice, \eqref{eq:datasvd_obj} can be approximated by sampling a matrix $\mathbf{X}_l \in \mathbb{R}^{n_l \times N}$ containing activation vectors collected from a calibration dataset, with $N$ chosen large enough to capture the principal directions of the data. As we show in \cref{app:layer_decomposition} and the space complexity can be made independent of $N$, scaling as $\mathcal{O}(n_l^2)$.

Solving the objective via SVD crucially induces a natural ordering of rank components within each layer, which enables a tractable global selection of components across layers via dynamic programming, as described next.
\begin{remark}
Data-aware decompositions of this form are not new, and they have been  in prior work (\eg{}, \citep{chen2021drone}). However, in our method, it serves only as initialization to extract candidate submodels from a pretrained network. Importantly, this initialization alone is not sufficient to recover good submodels, as we show in \cref{sec:experiments}.
\end{remark}
\subsection{Approximating the Pareto Front}
\begin{figure*}[t!]
    \centering
    \subfigure[Post-Training Selection (PTS)
                \label{fig:lin_models:pts}]
    {\includegraphics[width=0.33\linewidth]{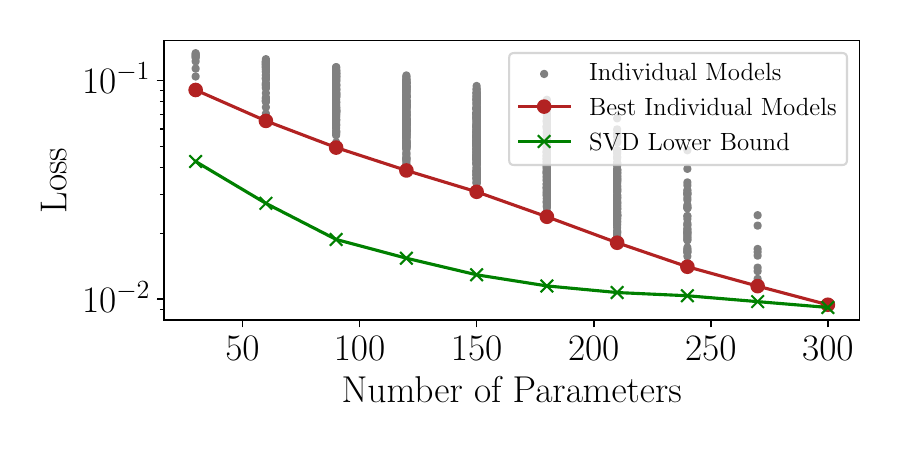}}
    \subfigure[All Submodel Learning (ASL)
                \label{fig:lin_models:asl}]
    {\includegraphics[width=0.33\linewidth]{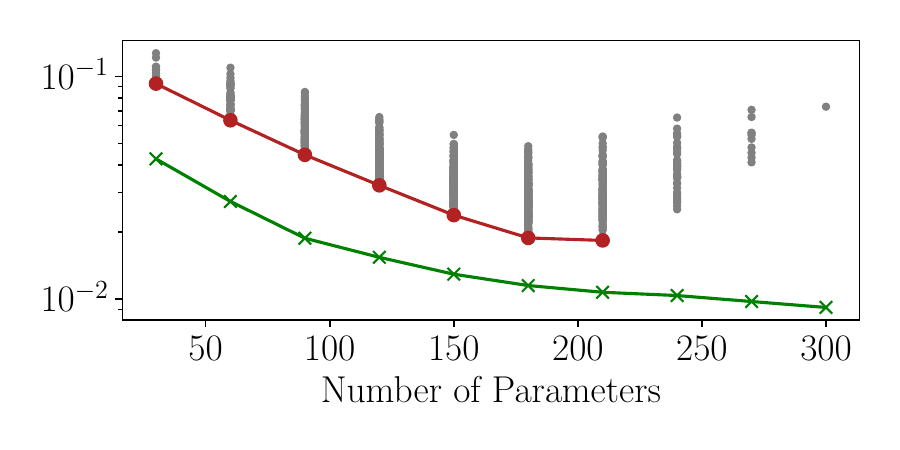}}
    \subfigure[Nested Submodel Learning (NSL)
                \label{fig:lin_models:nsl}]
    {\includegraphics[width=0.33\linewidth]{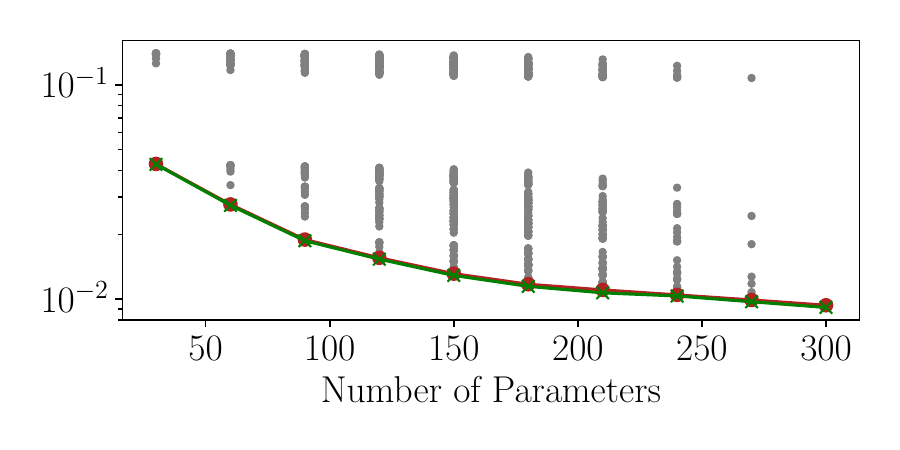}}
    \vspace{-0.3cm}
    \caption{%
    \textbf{Nested trained submodels are Pareto Elastic:} 
    Comparison of the considered submodels training strategies on the synthetic setting described in \cref{sec:exp_controlled_details}.
    Blue points visualize all $1023$ submodels, the red line represents the best models and the green one the true Pareto Front. 
    The difference between the red and green lines is the best submodel optimality gap as per \cref{eq:optimal_gap}, and is zero only for NSL.
    }
    \label{fig:lin_models_comparison}
    \vspace{-0.3cm}
\end{figure*}
Following the notation of \cref{sec:elastic_models}, for a given budget $\beta_k \in \mathcal{B}$, the transformation operator $\mathcal{T}_{\beta_k}$ selects a rank $r_{k,l} \leq r_l$ for each layer $l$, such that the overall budget constraint is satisfied. We denote the corresponding configuration vector by $\mathbf{m}_k = \{r_{k,l}\}_{l=1}^{L}$. We further denote by $\mathcal{T}_{\mathbf{m}_k}$ the transformation induced by $\mathbf{m}_k$.
To obtain an optimally elastic model, we seek a set of configurations $\mathcal{M} := \{\mathbf{m}_k\}_{k=1}^{K}$ that solve the following  problem
\begin{align}
    \label{eq:optimal_m_selection}
   \min_{\{\mathbf{m}_k\}_{k=1}^{K}}
    \sum_{k=1}^K
    \mathbb{E}_{\mathbf{d} \sim \mathcal{D}}
    \left[
    \mathcal{L}\bigl(f(\mathbf{d}; \mathcal{T}_{\mathbf{m}_k}(\theta^0))\bigr)
    \right],
\end{align}
where $\theta^0$ denotes the shared parameters obtained after layer-wise decomposition. Importantly, we optimize only over the masks $\{\mathbf{m}_k\}$.
While $\theta^0$ is not 
a final deployable model, we assume it is \emph{\textbf{sufficient} to identify configurations whose relative optimality is preserved throughout subsequent training.}

\textbf{Nestedness.}~We further impose a nestedness constraint $\mathbf{m}_{k-1} \preceq \mathbf{m}_k$ in \cref{eq:optimal_m_selection}. This is critical for limiting weight-sharing interference, as inconsistent rank selections across layers would prevent the shared parameters $\theta$ from converging to a coherent representational hierarchy. We provide theoretical evidence for this in a simplified setting in \cref{sec:theory}.

Identifying the optimal set $\mathcal{M}$ is combinatorial: even with $K$ candidate budgets across $L$ layers, the search space contains $K^L$ submodels. 
Since the initial decomposition is performed independently per layer, we implicitly assume that layers are approximately independent under $\theta^0$.
Accordingly, we assume that the ranking of possible submodels based on the combined low-rank approximations across layers is preserved under an additive probe, \ie{} we can predict the ranking of solutions in the search space by assuming the errors are additive.
This is a strong but standard assumption \citep{hubara2021accurate}, which we validate in an exhaustively searchable setting in \cref{sec:app:dp_ranking}. Crucially, it enables an efficient dynamic programming solution with complexity $\mathcal{O}(L \cdot K)$ (see \cref{alg:dp_rank_selection}). The resulting procedure is summarized in \cref{app:pareto_front}.

\subsection{Elastic Training Objective}
Once the set of optimal configurations $\mathcal{M}^\star = \{\mathbf{m}^\star_k\}_{k=1}^K$ is identified, we fix these mappings for each budget level $\beta_k \in \mathcal{B}$. Thus, for the remainder of the training phase, the operator $\mathcal{T}_{\mathbf{m}^\star_k}$ is fixed independent of the current $\theta$ according to the obtained rank assignments in $\mathbf{m}^\star_k$. 
We optimize $\theta$ by minimizing the distillation loss ($\mathcal{L}_{\text{KD}}$) between the elastic submodels and the original, non-decomposed pretrained model $f(\cdot; \theta_{\text{orig}})$. Since a strong pretrained base model is available, training from teacher logits provides a substantially richer supervision signal than ground-truth labels. We define the loss for the $k$-th budget level as:
\begin{equation}
    \label{eq:kd_loss_term}
    \ell_k(\theta) = \mathbb{E}_{\mathbf{d} \sim \mathcal{D}} \left[ \mathcal{L}_{\text{KD}} \big( f(\mathbf{d}; \mathcal{T}_{\mathbf{m}^\star_k}(\theta)), f(\mathbf{d}; \theta_{\text{orig}}) \big) \right].
\end{equation}
Therefore, the final objective becomes:
\begin{equation}
    \label{eq:final_method_objective}
    \textstyle
    \min_{\theta \in \mathbb{R}^d} \sum_{k=1}^{K} \alpha_k \ell_k(\theta), \quad \text{s.t. } \sum_{k=1}^{K} \alpha_k = 1,
\end{equation}
which can be efficiently solved by standard gradient-based optimization, where we sample $\ell_k$ proportionally to $\alpha_k$.
\vspace{-0.2cm}
\subsection{Recovering the True Pareto Front}
\vspace{-0.1cm}
We empirically validate the central approximation underlying \tool that nested configurations identified from a fixed layer-wise optimal initialization are sufficient to recover Pareto-optimal submodels after training. 
In a setting tractable to allow complete enumeration of the solution, we actually verify that \tool recovers the \emph{true} Pareto front obtained by independently training all possible submodels.
We therefore consider a four-layer network (two CNNs and two MLPs trained on MNIST, with $K=10$ rank levels per layer, yielding $K^L = 10{,}000$ possible submodels. This allows us to exhaustively evaluate the attainable Pareto front by independently training all submodels more details on this experimental setting are reported in \cref{sec:exp_controlled_details}.
\begin{figure}[t]
    \centering
    \includegraphics[width=\linewidth]{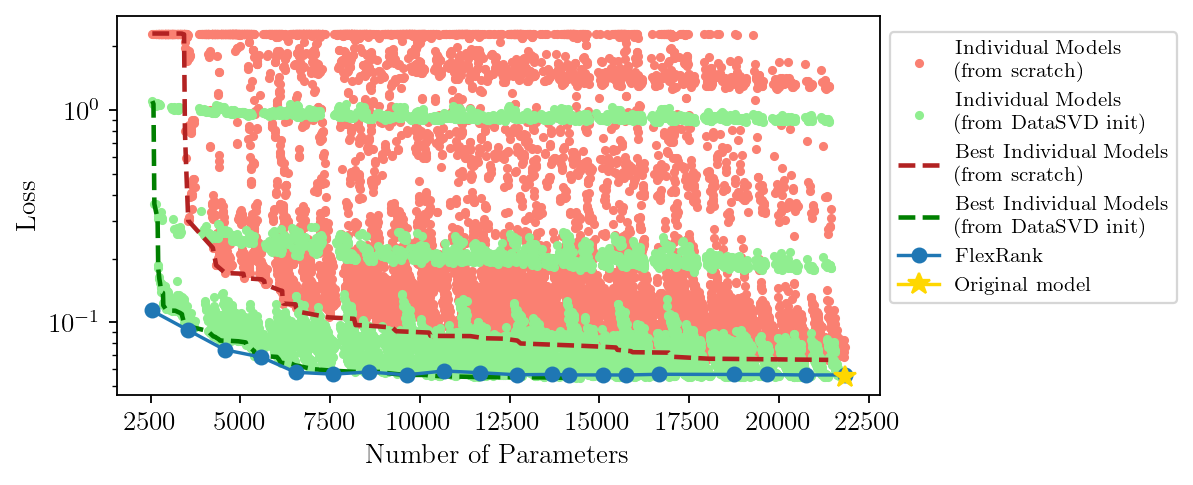}
    \caption{\textbf{\tool recovers the true Pareto Front in DNNs:}
    points represent independently trained nested submodels,  starting from (i) a random weights (red) or (ii) from the DataSVD (green) of a pretrained model (yellow star), with best models highlighted with dashed lines. At convergence, \tool recovers the (in advance unknown) Pareto front within a single set of shared weights.
    }
    \label{fig:mnist_pareto}
    \vspace{-0.4cm}
\end{figure}
\vspace{-0.2cm}

Nestedness and weight sharing make optimization more constrained, but impose a hierarchy of shared rank components that transfers knowledge across budgets: larger submodels refine representations reused by smaller ones, while smaller submodels keep shared components compact and reusable. \cref{fig:mnist_pareto} supports this interpretation: \tool recovers the Pareto front of independently trained DataSVD-initialized models while outperforming models trained from scratch.

\vspace{-0.2cm}
\subsection{Gauge-Aligned Reparametrization (GAR)}
\vspace{-0.1cm}
\label{sec:method:gar}
We introduce an efficient reparametrization of the factorized weights which, once a target rank $r$ is fixed at inference, reduces the cost of matrix--vector multiplication to $\mathcal{O}((m+n-r)r)$, which is strictly less than the $\mathcal{O}(mn)$ FLOPs required for dense multiplication for any $r < \min(m,n)$.
The technique comprises exploiting the non-uniqueness of the $(U,V)$ factorization, in order to avoid storing and multiplying a dense $(r \times r)$ block. In practice, defining the ``gauge'' $G = U_{1:r,:}^{-1}$, it follows that:
\begin{align}
\label{eq:gar}
\textstyle
    UV^\top
    =
    \!
    \underbrace{(UG)}_{\tilde{U} \in \mathbb{R}^{m\times r}}
    \!
    \underbrace{(G^{-1}V^\top)}_{\tilde{V}^\top \in \mathbb{R}^{r\times n}},
    \quad
    &\tilde{U}
    =
    \mathopen{\big[}\;
    \!
    \underbrace{I_r \vphantom{(G^{-1}V^\top)} }_{\scriptstyle {\vphantom{(m-r)}r\times r}}
    \!
    \;\;
    \underbrace{\hat{U} \vphantom{(G^{-1}V^\top}}_{\scriptstyle {(m-r) \times r}}
    \hspace{-2mm}\mathclose{\big]}^{\top},
\end{align}
where the identity $I_r$ does not need to be stored or multiplied explicitly.
Using the GAR form $(\hat{U}, \tilde{V})$ reduces the standard inference cost of a rank-$r$ factorization, scaling linearly with $r$ and enabling computational gains without aggressive rank reduction.
The matrix $G$ is computed once per layer after rank selection via a matrix inversion, costing $\mathcal{O}(r^3)$ FLOPs, which is negligible compared to SVD calculation.

As GAR is general to full rank matrices, we do not consider it exclusive to \tool, and apply it uniformly across all rank-based methods in our experiments (see \cref{sec:experiments:remark}).

\vspace{-0.2cm}
\section{The Need for Nestedness}
\label{sec:theory}
In this section, we discuss how to formulate elastic training in the rank space to obtain Pareto-optimal submodels. For linear models we prove: \textit{(i)} why training only the full model does not recover optimal submodels; and that \textit{(ii)} training all possible submodels leads to degradation of the Pareto Front. We finally prove the core result our method is based on: Pareto-optimal submodels are found by training only ``nested'' submodels. Proofs are deferred to \cref{sec:proofs}.
\vspace{-0.2cm}
\subsection{Setup}
\vspace{-0.15cm}

Consider a linear model represented by a matrix $M \in \mathbb{R}^{m \times n}$, parameterized as $ M = U V^\top$ with $U \in \mathbb{R}^{m \times k}$, $V \in \mathbb{R}^{n \times k}$, and $ k = \min(m,n)$.
Let $M^\star$ be the optimal solution, and assume it has SVD $P\Sigma Q^\top$ satisfying:
$
    \Sigma=\mathrm{diag}(\sigma_1,\dots,\sigma_k), \quad
    \sigma_i>\sigma_{i+1}>0 \quad\forall i<k.
$
Let $\Pi_{S_r} = \mathrm{diag}(s^r_1,\dots,s^r_k)$, where $s^r_i = \mathbf{1}(i \in S_r)$. Then, \cref{eq:elastic_opt_problem} under low-rank transformation is equivalent to:
\begin{equation}
\label{eq:m_approx}
    \min_{U,V,\{S_r\}_{r=1}^k} \frac{1}{k} \sum_{r=1}^{k}\|U\Pi_{S_r} V^\top - M^\star\|_F^2.
\end{equation}
For $r\in [k]$, the best rank $r$ appproximation of $M^\star$ is equal to the (unique) truncated SVD
$
    A_r =\sum_{i=1}^r \sigma_i\,p_i q_i^\top,
$
where $p_i,q_i$ denote the $i$-th columns of $P,Q$. By the Eckart--Young--Mirsky theorem, it follows that the Pareto front is the set $\{A_r\}_{r=1}^{k}$. The best submodel optimality gap is then:
\begin{equation}
    \label{eq:optimal_gap}
    \mathcal{E}(U,V,r)\;:=\;\min_{S_r\subseteq[k]}\ \bigl\|U\Pi_{S_r} V^\top - A_r\bigr\|_F^2.  
\end{equation}
This represents the lower bound on the reconstruction error of submodels learned by any algorithm.
In particular, in the rest of the section, we assume we have access to the best selection indices $S_r$, which is always theoretically possible by exhaustively exploring the whole search space.
\subsection{Why Post-Training Selection (PTS) Fails}
\label{sec:theory:pts}
One simple approach to solve \cref{eq:m_approx} is based on a two-stage procedure: firstly, the full model is parametrized in the form $(U, V)$ and trained from scratch, by only optimizing 
\begin{align}
    \label{eq:pts:obj_fn}
        \min_{U,V} \|UV^\top -M^\star\|_{F}^2.
\end{align}
Secondly, the best selection indices $S_r$ are found for all $r \in [k]$. We call this algorithm Post-Training Selection (PTS), since it is based on a post-hoc selection of singular vectors after regular training.
PTS provably does not lead to \textit{optimally} elastic models,
as it is always possible to find a model that performs better at any reduced parameter budget. 

\begin{theorem}
\label{thm:pts}
Let 
$
\mathcal{M}:=\{(U,V):\ UV^\top=M^\star\}
$
be the set of global minimizers of \eqref{eq:pts:obj_fn}. Then, for each $r<k$, the set 
$
\mathcal{M}_r:=\{(U,V)\in\mathcal{M}:\ \mathcal{E}(U,V,r)=0\}
$
has Lebesgue measure zero relative to $\mathcal{M}$.
\end{theorem}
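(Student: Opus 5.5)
Since $M^\star$ has rank $k$ and $U,V$ have exactly $k$ columns, both factors are full column rank. Hence $\mathcal{M}$ is exactly the orbit
\[
\mathcal{M}=\{(P\Sigma^{1/2}G,\;Q\Sigma^{1/2}G^{-\top}) : G\in GL_k(\mathbb{R})\},
\]
and the map $G\mapsto(U,V)$ is a diffeomorphism onto $\mathcal{M}$. (Indeed, if $UV^\top=M^\star$, set $G=\Sigma^{-1/2}P^\top U$; full column rank forces $V=Q\Sigma^{1/2}G^{-\top}$.) The natural "Lebesgue measure relative to $\mathcal{M}$" is therefore the pushforward of Lebesgue measure on the open set $GL_k\subset\mathbb{R}^{k\times k}$, up to equivalence of smooth measures on a $k^2$-dimensional manifold. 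The plan is to show that, in $G$-coordinates, $\mathcal{M}_r$ lies in a finite union of proper real-algebraic subsets of $GL_k$. Each such subset is Lebesgue-null.

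\textbf{Computing the selection in $G$-coordinates.} For $S\subseteq[k]$ with coordinate projector $\Pi_S$,
\[
U\Pi_S V^\top=P\Sigma^{1/2}\,G\Pi_S G^{-1}\,\Sigma^{1/2}Q^\top .
\]
Let $E_r=\mathrm{diag}(1,\dots,1,0,\dots,0)$ with $r$ ones, so that $A_r=P\Sigma^{1/2}E_r\Sigma^{1/2}Q^\top$. Since $P,Q$ have orthonormal columns and $\Sigma^{1/2}$ is invertible, $\mathcal{E}(U,V,r)=0$ holds iff there is some $S$ with
\[
G\Pi_S G^{-1}=E_r .
\]
Comparing ranks forces $|S|=r$, so there are only $\binom{k}{r}$ candidate sets. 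For a fixed $S$, the condition says that the projection $G\Pi_SG^{-1}$, which has range $G\,\mathrm{span}(e_S)$ and kernel $G\,\mathrm{span}(e_{S^c})$, equals the orthogonal projection $E_r$. Equivalently, $G$ maps $\mathrm{span}(e_S)$ onto $\mathrm{span}(e_1,\dots,e_r)$ and $\mathrm{span}(e_{S^c})$ onto $\mathrm{span}(e_{r+1},\dots,e_k)$. This is the polynomial condition that the entries $G_{ij}$ vanish for $i>r,\ j\in S$ and for $i\le r,\ j\notin S$.

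\textbf{Measure-zero conclusion.} For $r<k$ (and $r\ge1$), the set of $G$ described above is a linear subspace of $\mathbb{R}^{k\times k}$ of dimension $r^2+(k-r)^2<k^2$, intersected with $GL_k$. Its codimension is $2r(k-r)>0$, so it is Lebesgue-null. A finite union over the $\binom{k}{r}$ choices of $S$ remains null. Pushing forward through the diffeomorphism, which maps null sets to null sets, shows that $\mathcal{M}_r$ has measure zero relative to $\mathcal{M}$.

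\textbf{Main obstacle.} The delicate step is formalizing "Lebesgue measure relative to $\mathcal{M}$". I would state explicitly that $\mathcal{M}$ is a smooth $k^2$-dimensional embedded submanifold, identified with $GL_k$ via the parametrization above, and that null sets are chart-independent. I also need to justify that the parametrization is injective and smooth with smooth inverse $G=\Sigma^{-1/2}P^\top U$. Once this is in place, the rest is linear algebra. The distinctness of the $\sigma_i$ is not actually needed beyond guaranteeing that $A_r$ is unique.
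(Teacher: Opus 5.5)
Your proposal is correct and follows the paper's proof essentially step for step: the same gauge parametrization $U=P\Sigma^{1/2}G$, $V=Q\Sigma^{1/2}G^{-\top}$, the same reduction to $G\Pi_S G^{-1}=E_r$ (the paper's commutation relation $R\Pi_{S_r}=\Pi_{[r]}R$), the same block structure of codimension $2r(k-r)$, and a finite union over the $\binom{k}{r}$ subsets. Your additional steps are useful but not a different route: you justify that every minimizer has this form, that $|S|=r$ is forced, and that the parametrization is a diffeomorphism so nullity transfers to $\mathcal{M}$; the paper only asserts these, and it phrases the conclusion probabilistically via random initialization, whereas your measure-theoretic conclusion matches the theorem as stated more directly.
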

\vspace{-0.2cm}
The theorem proves that the chance that any algorithm that only operates on \eqref{eq:pts:obj_fn} would find global minimizer of \eqref{eq:m_approx} is zero. 
Simulations in \cref{fig:lin_models:pts} confirm this phenomenon.
\vspace{-0.2cm}
\subsection{All-Subspaces Learning Degrades the Pareto Front}
\label{sec:theory:asl}
\vspace{-0.1cm}
The main insight from \cref{thm:pts} is that training only the full model is not enough to obtain an optimal solution across all ranks. Consequently, it is natural to consider training all possible submodels and then choose the best ones.
We call this All-Subspaces Learning (ASL), where the objective is
\begin{align}
    \label{eq:asl}
    \min_{U,V}\,
    \frac{1}{2^k-1}\sum_{\substack{S\subseteq[k]: S \neq \emptyset}}\left\|U \Pi_S V^\top - M^\star\right\|_F^2.
\end{align}
Similar to PTS, this approach also fails.
\begin{theorem}[ASL has strictly positive submodel gap]
\label{thm:asl_main}
Let $(U,V)$ be any minimizer of \eqref{eq:asl}, and let $\lambda=\tfrac{1}{k}\|UV^\top\|_\star$. Then for every $r\in\{1,\dots,k\}$, the following holds
\begin{equation*}
    \mathcal{E}(U,V,r) \ge \frac{1}{k}\Bigl(r\lambda-\sum_{i=1}^r \sigma_i\Bigr)^2.
\vspace{-0.2cm}
\end{equation*}
\end{theorem}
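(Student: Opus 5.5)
The approach is to rewrite the ASL objective \eqref{eq:asl} in terms of the rank-one pieces $B_i:=u_iv_i^\top$, where $u_i,v_i$ are the $i$-th columns of $U,V$, and their sum $B:=\sum_i B_i=UV^\top$. Write $B_S:=U\Pi_SV^\top=\sum_{i\in S}B_i$. Each index lies in $2^{k-1}$ subsets and each pair in $2^{k-2}$ subsets, so
\begin{equation*}
\sum_{S}B_S=2^{k-1}B,\qquad \sum_S\|B_S\|_F^2=2^{k-2}\Bigl(\|B\|_F^2+\sum_i\|B_i\|_F^2\Bigr).
\end{equation*}
Hence, up to additive constants and a positive factor, \eqref{eq:asl} equals
\begin{equation*}
\|B\|_F^2-4\langle B,M^\star\rangle+\sum_{i=1}^k\|B_i\|_F^2 .
\end{equation*}
(Adding or dropping the empty set only changes constants.) So ASL behaves like a fit to $2M^\star$ plus a penalty on the sum of squared norms of the rank-one pieces.

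The key step is to minimize the penalty with $B$ held fixed. Using $\|B_i\|_F=\|u_i\|\|v_i\|=\|B_i\|_\star$, the triangle inequality for the nuclear norm and Cauchy--Schwarz give
\begin{equation*}
\|B\|_\star\le\sum_i\|B_i\|_\star=\sum_i\|B_i\|_F\le\sqrt{k}\Bigl(\sum_i\|B_i\|_F^2\Bigr)^{1/2}.
\end{equation*}
Thus $\sum_i\|B_i\|_F^2\ge\|B\|_\star^2/k$. I then claim this bound is attainable for every $B$ of rank at most $k$. Write $B=P'\Sigma'Q'^\top$ and take $U=P'\Sigma'^{1/2}H$, $V=Q'\Sigma'^{1/2}H$ with $H$ orthogonal. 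Then $\|u_i\|\|v_i\|=(H^\top\Sigma'H)_{ii}$, and all pieces are PSD-aligned, so nuclear norms add. By Schur--Horn, one can choose $H$ so that every diagonal entry of $H^\top\Sigma' H$ equals $\mathrm{tr}(\Sigma')/k$.

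Consequently, at any minimizer $(U,V)$ both inequalities above must be tight; otherwise re-factorizing $B$ would strictly lower the objective. Equality in Cauchy--Schwarz gives $\|B_i\|_\star=\|B\|_\star/k=\lambda$ for all $i$. Equality in the triangle inequality, combined with $\|B\|_\star\le\|B_S\|_\star+\|B_{S^c}\|_\star\le\sum_i\|B_i\|_\star$, forces $\|B_S\|_\star=\sum_{i\in S}\|B_i\|_\star=|S|\lambda$ for every $S$.

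To conclude, take the selection $S_r$ with $|S_r|=r$ (the rank-$r$ budget). Then $\|B_{S_r}\|_\star=r\lambda$ and $\|A_r\|_\star=\sum_{i\le r}\sigma_i$. Since $\mathrm{rank}(B_{S_r}-A_r)\le k$,
\begin{equation*}
\|B_{S_r}-A_r\|_F^2\ge\tfrac1k\|B_{S_r}-A_r\|_\star^2\ge\tfrac1k\Bigl(r\lambda-\sum_{i\le r}\sigma_i\Bigr)^2 .
\end{equation*}
Because this holds for every admissible $S_r$, it also holds for the minimum $\mathcal{E}(U,V,r)$.

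The main obstacle is the attainability claim behind the rigidity step: I must make sure the equal-diagonal rotation exists (Schur--Horn, since the constant vector is majorized by the spectrum). I must also check that tightness of the triangle inequality for the whole sum passes to every subset, which the sandwich argument above handles. A secondary subtlety is that the statement implicitly restricts $S_r$ to size $r$. If arbitrary $|S|$ were allowed, the same argument would only give the bound with $|S|\lambda$ in place of $r\lambda$, so I would state the convention $|S_r|=r$ explicitly.
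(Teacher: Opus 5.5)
Your proof is correct. Its first half matches the paper's. Your subset-counting expansion is the rank-dropout identity of Lemma~\ref{lem:dropout-expansion}, with the empty mask handled as in Lemma~\ref{lem:masks-equivalence}. Your triangle-inequality/Cauchy--Schwarz/Schur--Horn step is exactly Lemma~\ref{lem:balanced-penalty}, giving $\|u_j\|_2\|v_j\|_2=\lambda$ and $\sum_j\|u_j\|_2\|v_j\|_2=\|UV^\top\|_\star$ at any minimizer.

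The two routes split at the final step. The paper invokes Lemma~\ref{lem:spectral-solution} to place $W^\star=UV^\top$ in the singular subspaces of $M^\star$. It then uses the common dual certificate $G=PQ^\top$, with $\|G\|_{\mathrm{op}}=1$ and $\|G\|_F=\sqrt{k}$, and deduces $\langle G,u_jv_j^\top\rangle=\lambda$ for each $j$ by tightness. Hence $\langle G,U\Pi_SV^\top\rangle=r\lambda$ and $\langle G,A_r\rangle=\sum_{i\le r}\sigma_i$, and the paper concludes by Cauchy--Schwarz against $G$. You instead push tightness of the nuclear-norm triangle inequality down to every subset via the sandwich $\|B\|_\star\le\|B_S\|_\star+\|B_{S^c}\|_\star\le\sum_i\|B_i\|_\star$. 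You then finish with the reverse triangle inequality and $\|X\|_\star\le\sqrt{k}\,\|X\|_F$.

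What your route buys: it never uses how $UV^\top$ is oriented relative to $M^\star$. It therefore bypasses the von Neumann/KKT analysis and proves the bound for any factorization attaining $\mathcal{F}_k(UV^\top)$ in Lemma~\ref{lem:balanced-penalty}. It also sharpens for free: since $\mathrm{rank}(U\Pi_{S_r}V^\top-A_r)\le 2r$, you can replace $1/k$ by $1/\min(2r,k)$.

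What the paper's route buys: an explicit spectral form of the minimizer, with soft-thresholded singular values, so $\lambda=\frac1k\sum_i\sigma_i$ when no thresholding occurs. The paper relies on this for its follow-up claim that the right-hand side is nonzero for generic spectra. Your argument establishes the stated inequality but would need that lemma separately for the positivity remark.

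Your caveat that $|S_r|=r$ must be imposed matches the convention the paper uses implicitly in its proof.
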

The above theorem implies that for a generic matrix $M^\star$ with non-identical singular values, the optimality gap is strictly positive for some $r$. This is also confirmed with numerical simulations presented in \cref{fig:lin_models:asl}.   
The failure of ASL stems from a multi-objective conflict: different submodels corresponding to the same rank $r$ compete for representational capacity. Each submodel attempts to converge to the optimal $r$-rank approximation $A_r$, but in doing so, it disrupts the global parameters and shifts the overall objective. 
We provide a simple example of this ``interference'' of submodels objectives in \cref{corollary:train_all}.
\subsection{Nested Learning}
\label{sec:nestedness}
To address the prior issues, we propose Nested Subspace Learning (NSL). The key intuition is to design the training objective so that the resulting parameters naturally inherit the prefix structure of the Pareto front, leading to:
\begin{equation}
\label{eq:nsl}
    \min_{U,V}\frac{1}{k} \sum_{r=1}^{k}\|U\Pi_{[r]}V^\top - M^\star\|_F^2.
\vspace{-0.1cm}
\end{equation}
Unlike ASL, NSL optimizes exactly one submodel for each rank $r \in [k]$ and enforces sub-objective compatibility by construction. Since each sub-objective $r$ targets the $r$-rank approximation $A_r$, the nested structure ensures that the additional $(r+1)$-th column only needs to learn the residual $A_{r+1} - A_r$. Consequently, NSL recovers the Pareto front.
\begin{theorem}[NSL preserves nested minimizers]
\label{thm:nsl}
Let $(U,V)$ a minimizer of \eqref{eq:nsl}, then
$
\forall\ r \in [k]: \mathcal{E}(U,V,r) = 0.
$
\end{theorem}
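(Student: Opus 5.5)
The plan is to show that the NSL objective \eqref{eq:nsl} attains the value $\frac1k\sum_{r=1}^k\|A_r-M^\star\|_F^2$ and that this is a lower bound. Equality then forces every prefix product $U\Pi_{[r]}V^\top$ to equal $A_r$, which gives $\mathcal{E}(U,V,r)=0$ by choosing $S_r=[r]$ in \eqref{eq:optimal_gap}.

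\textbf{Lower bound.} For each $r$, the matrix $U\Pi_{[r]}V^\top=\sum_{i=1}^r u_iv_i^\top$ has rank at most $r$. By the Eckart--Young--Mirsky theorem, $\|U\Pi_{[r]}V^\top-M^\star\|_F^2\ge\|A_r-M^\star\|_F^2=\sum_{i>r}\sigma_i^2$ termwise. Summing over $r$, the NSL objective is at least $\frac1k\sum_r\sum_{i>r}\sigma_i^2$ for every $(U,V)$.

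\textbf{Achievability.} Take $U=P\Sigma^{1/2}$ and $V=Q\Sigma^{1/2}$, keeping only the first $k$ columns of $P$ and $Q$. Then $U\Pi_{[r]}V^\top=\sum_{i\le r}\sigma_ip_iq_i^\top=A_r$ for all $r$ at once, so the bound is attained. Hence the minimum value of \eqref{eq:nsl} equals that sum.

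\textbf{Conclusion.} Let $(U,V)$ be any minimizer. Its objective equals the lower bound, and each summand is individually bounded below by the corresponding Eckart--Young term. So every summand must attain its bound: $\|U\Pi_{[r]}V^\top-M^\star\|_F^2=\|A_r-M^\star\|_F^2$ for every $r$.

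The main obstacle is converting optimality of the error value into equality of matrices. This is where the assumption $\sigma_r>\sigma_{r+1}$ enters. With a strict gap, the best rank-$r$ approximation in Frobenius norm is unique: any rank-$\le r$ matrix achieving error $\sum_{i>r}\sigma_i^2$ must equal the truncated SVD $A_r$. I would cite this uniqueness as the standard strengthening of Eckart--Young--Mirsky, which the paper already uses when calling $A_r$ "the (unique) truncated SVD". If a self-contained argument is needed, I would use von Neumann's trace inequality, $\|B-M^\star\|_F^2\ge\sum_i(\sigma_i(B)-\sigma_i)^2$, whose equality case forces $B$ to share singular vectors with $M^\star$; distinct singular values then pin down $B=A_r$.

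Once uniqueness is in hand, $U\Pi_{[r]}V^\top=A_r$, and therefore $\mathcal{E}(U,V,r)\le\|U\Pi_{[r]}V^\top-A_r\|_F^2=0$ for all $r\in[k]$. No claim is made that the individual columns $u_r,v_r$ are uniquely determined; they are fixed only up to gauge, which the statement does not require.
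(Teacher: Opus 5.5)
Your proof is correct and follows the paper's argument: the termwise Eckart--Young--Mirsky lower bound is attained simultaneously by the SVD factors, which forces $U\Pi_{[r]}V^\top = A_r$ for every $r$, and $S_r=[r]$ then gives $\mathcal{E}(U,V,r)=0$. You also state explicitly the uniqueness of the best rank-$r$ approximation under $\sigma_r>\sigma_{r+1}$, which the paper uses without saying so. You rightly omit the paper's induction recovering $u_r v_r^\top=\sigma_r p_r q_r^\top$, since the statement does not need it.
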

Simulations in \cref{fig:lin_models:nsl} align with the theory: nested training successfully recovers the Pareto Front $\{A_r\}_{r=1}^{k}$.
\vspace{-0.2cm}
\section{Experiments}
\begin{figure*}[!t]
    \centering
    \subfigure{
        \includegraphics[width=1\linewidth]{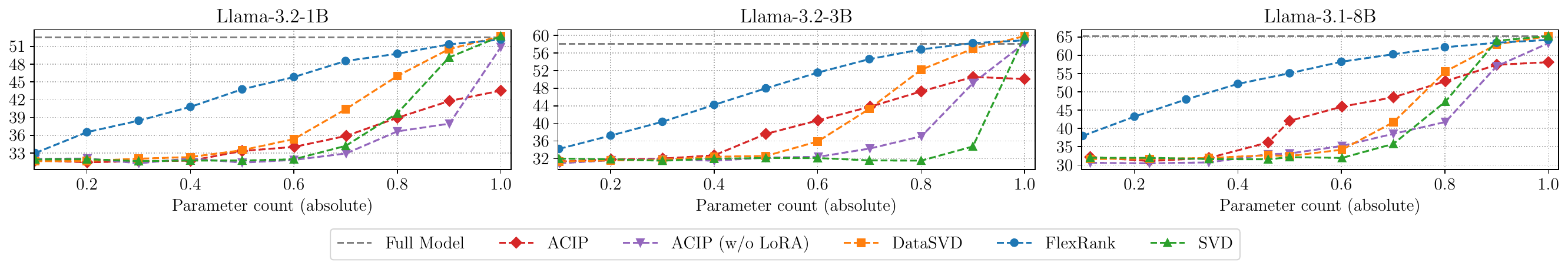}
        }\\
    \vspace{-1em}
    \subfigure{
        \includegraphics[width=1\linewidth]{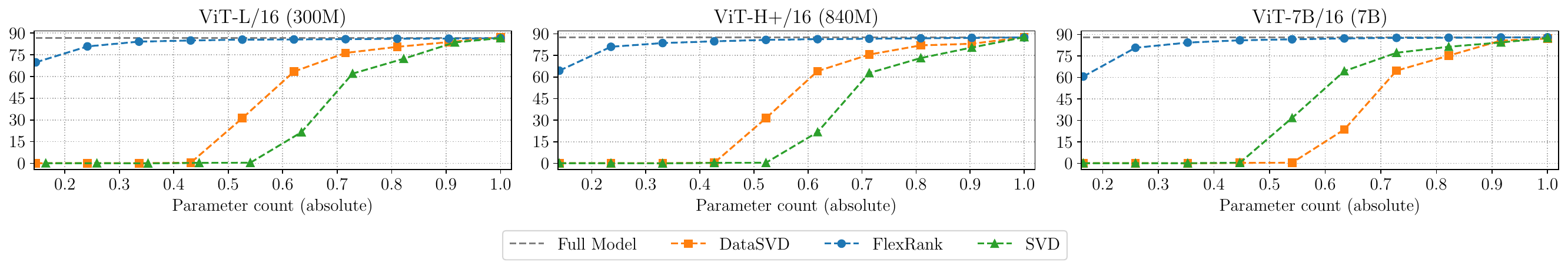}
        }
    \vspace{-1.5em}
    \caption{%
    \textbf{\tool has the most graceful performance degradation across parameter budget:} (top) Average downstream task accuracy over commonsense downstream datasets from \texttt{lm-eval-harness} and (bottom) classification accuracy on the evaluation split of ImageNet1K. In this latter case, the performance gap remains within a $5\%$ margin \wrt{} the full model even pruning up to $70\%$.}
    \label{fig:llamas_vits}
\end{figure*}
\label{sec:experiments}

\paragraph{Training.} We consider both NLP and CV tasks: for the former, we employ GPT-2 and three recent models of the Llama family (3.2-1B, 3.2-3B, and 3.1-8B) \cite{grattafiori2024llama}. The calibration dataset used for FlexRank is FineWebEdu-10BT~\cite{penedo2024fineweb}.
For CV, we employ the recent DINOv3 ViT models~\cite{siméoni2025dinov3}, ranging from the ViT-L/16 up to the ViT-7B/16, and choose image classification on ImageNet1K as the downstream task.
\vspace{-0.3cm}
\paragraph{Evaluation.}
For GPT-2, we show results on the evaluation loss on a held-out split of the proxy dataset, as is common in the literature \cite{genzel2025choose}. 
For Llama models, we evaluate the zero-shot accuracy of models on commonsense datasets commonly used in the literature using the lm-eval-harness tool \citep{eval-harness}, while for CV tasks, we evaluate on the validation split of ImageNet1K.
Additional details on training and evaluation are provided in \cref{sec:app:exp:datasets_models}.
\vspace{-0.3cm}
\paragraph{Comparisons.}
Most low-rank compression methods select ranks from a per-layer SVD, either computed directly on the weights or informed by activations (\eg{} ASVD~\citep{yuan2023asvd} and $\text{A}^3$~\citep{wong2025a3}). We therefore benchmark SVD and DataSVD with our DP search, isolating the effect of nested submodel training.
A second class of methods performs additional optimization to recover the error introduced by compression, either by updating the full weights, as in DRONE~\citep{chen2021drone}, or by training LoRA adapters, as in SVD-LLM~\citep{wang2025svdllm}.
Within this family, we compare against ACIP~\citep{genzel2025choose}, the current state-of-the-art low-rank elastic method. ACIP prunes an SVD-decomposed pretrained model with frozen base weights, jointly optimizing LoRA adapters and pruning scores that define the submodels.
\begin{remark}
\label{sec:experiments:remark}
For all rank-based approaches, including the baselines, we apply GAR (\cref{sec:method:gar}) after rank selection. Crucially, this is why the reported inference-time relative parameter counts remain equal or lower than full model's.
\end{remark}

\paragraph{Comparison at matched training budget.}
We consider each competitor algorithm at their best tuning/budget. Due to the very different nature of approaches in the literature, with some methods just training small adapters~\citep{genzel2025choose,ma2023llmpruner}, this is the arguably the fairest choice.
This holds both for lightweight ones (like \textsc{LLM-Pruner} and \textsc{ACIP}) and for methods that are much heavier than \tool, such as \textsc{LayerSkip}, which has similar training complexity to ours but has been trained on $839B$ tokens \cite{elhoushi2024layerskip} (\ie{}, $167\times$ our $5B$ token budget).
In addition, in \cref{fig:addtional_baselines} we include a non-elastic baseline where we independently train the same submodels selected by \tool, starting from the same DataSVD initialization and using the same overall budget as \tool (\ie{}., $10$ models, each trained with $10\%$ of the total budget). 
\vspace{-0.3cm}
\subsection{Main Results}
\label{sec:experiments:main}
\vspace{-0.1cm}
As shown in \cref{fig:llamas_vits}, methods based solely on SVD decomposition already degrade sharply after removing $20\%$ of the parameters.
This is consistent with the limitations of post-training selection discussed in \cref{sec:theory:pts}: even when each layer admits an optimal SVD factorization, the resulting decomposition is not necessarily \textit{globally} optimal, so even ideal submodel selection can incur a substantial drop.

For ACIP, \cref{fig:llamas_vits}-(top) shows that adding shared trainable parameters to compensate for compression error yields limited gains and can hinder full-budget recovery. This is connected to ASL-like dynamics (\cref{sec:theory:asl}), where adapters compete for the representational capacity lost during compression. Without adapter training, ACIP reduces to a PTS-style method and recovers full-budget performance.
This complements findings in the literature that adapters are often sufficient to recover from compression error in a single model and challenges the belief that such a simple approach could be as effective for elastic models.

For Vision Transformer results are even stronger: \cref{fig:llamas_vits}-(bottom) shows compressing to $30\%$ of the original model size remains close to full-model performance.
We attribute this partly to ImageNet1K being substantially smaller than FineWebEdu, allowing more epochs within the same compute budget.
\cref{fig:addtional_baselines} includes a non-elastic strong baseline, where \tool submodels are independently trained from the same initialization and at matched budget, leading to $10$ separate sets of weights. \tool slightly outperforms those models on average; we hypothesize that this is because it leverages weight sharing, enabling smaller submodels to help optimize larger ones.

\vspace{-0.4cm}
\paragraph{Comparison with other compression families.}
The focus of this work is to advance rank-based elastic compression, because it provides a natural importance ordering through ``SVD-like decompositions'', which in turn yields a principled route to nested submodel construction. For this reason, prior methods in that regime are the most direct and informative baselines.
However, low-rank compression is not the only possible route to elasticity: compression axes such as pruning, depth elasticity, or quantization are complementary rather than mutually exclusive. 
\begin{figure}[H]
    \centering
    \vspace{-0.4cm}
    \includegraphics[width=0.95\linewidth]{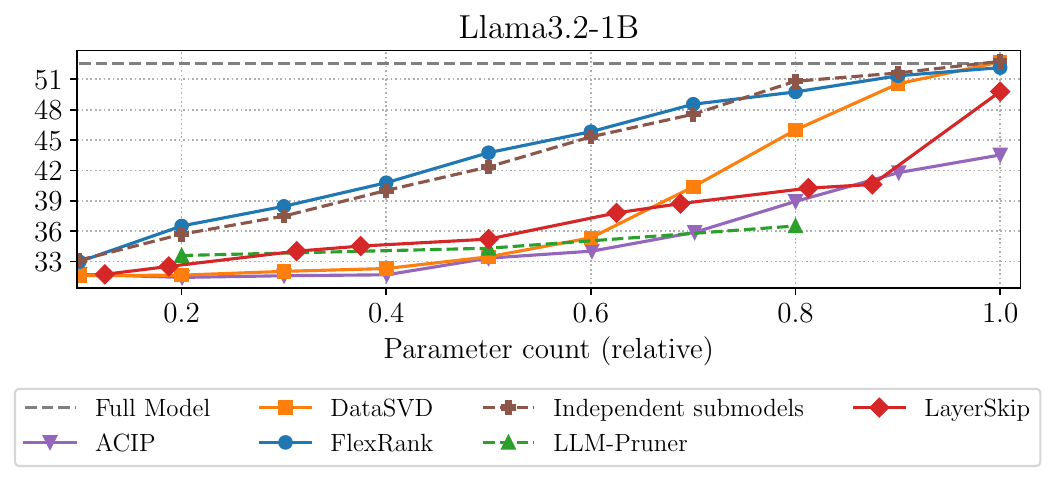}
    \caption{\textbf{\tool is competitive beyond rank-based approaches.}  Comparison with \textsc{LLM-Pruner}, \textsc{LayerSkip} and a baseline that trains the same \tool submodels independently at equal total budget. Dashed lines denote non-elastic methods.
    }
    \label{fig:addtional_baselines}
\end{figure}
To provide broader context, in the \cref{fig:addtional_baselines} we additionally compare against representative methods from other families, including \textsc{LLM-Pruner}~\citep{ma2023llmpruner} (structured pruning) and \textsc{LayerSkip}~\citep{elhoushi2024layerskip} (variable depth). On Llama-3.2-1B, these results suggest that \tool is competitive beyond rank-based approaches.

\subsection{Post-Adaptation Performance}
\tool submodels can be further finetuned and employed in downstream applications. We show this for math and code domains by following the common practice of adding lightweight LoRA adapters on individual submodels (additional details in \cref{sec:app:exp:datasets_models}).
\cref{tab:post_adapt} shows that, in both domains, the submodels achieve meaningful performance and exhibit graceful degradation across constrained budgets.

\begin{table}[thb]
\centering
\caption{\textbf{\tool submodels can be finetuned with LoRA:} average accuracy over math and code domains, at varying elastic sizes. ``Base'' denotes the performance of the original model.}
\label{tab:post_adapt}
\resizebox{0.95\linewidth}{!}{%
\begin{tabular}{@{}rrrrr@{}}
\toprule
\multicolumn{1}{c}{\multirow{2}{*}{\makecell{Relative \\ Size}}} & \multicolumn{2}{c}{Llama-3.2-1B}                    & \multicolumn{2}{c}{Llama-3.2-3B}                    \\ \cmidrule(l){2-5} 
\multicolumn{1}{c}{}                               & \multicolumn{1}{c}{Math} & \multicolumn{1}{c}{Code} & \multicolumn{1}{c}{Math} & \multicolumn{1}{c}{Code} \\ \midrule
Base                                               & $25.69 {\scriptstyle\, \pm 0.99}$         & $18.33 \pm {\scriptstyle\, 2.35}$         & $40.56 {\scriptstyle\,\pm 1.16}$         & $36.78 {\scriptstyle\, \pm 2.95}$         \\ \midrule
$1\times$                                          & $25.03 {\scriptstyle\,\pm 0.98}$         & $18.63 {\scriptstyle\,\pm 2.35}$         & $40.49 {\scriptstyle\,\pm 1.12}$         & $33.76 {\scriptstyle\, \pm 2.90}$         \\
$0.8\times$                                        & $20.48 {\scriptstyle\, \pm 0.95}$         & $9.30 {\scriptstyle\, \pm 1.83}$          & $32.95 {\scriptstyle\, \pm 1.08}$         & $22.59 {\scriptstyle\, \pm 2.58}$         \\
$0.6\times$                                        & $15.70 {\scriptstyle\, \pm 0.73}$         & $3.34 {\scriptstyle\,\pm 1.14}$          & $23.84 {\scriptstyle\, \pm 0.96}$         & $6.67 {\scriptstyle\, \pm 1.58}$          \\
$0.4\times$                                        & $13.58 {\scriptstyle\, \pm 0.63}$         & $1.22 {\scriptstyle\,\pm 0.61}$          & $15.54 {\scriptstyle\,\pm 0.71}$         & $2.03 {\scriptstyle\, \pm 0.88}$          \\ \bottomrule
\end{tabular}%
}
\end{table}

\vspace{-0.2cm}
\subsection{Ablations}

\paragraph{Analysis of model profiles}
\cref{fig:abl_profiles} shows the compression factors corresponding to four submodels, from bigger to smaller, on GPT-2. As can be seen, not all components of the network are uniformly truncated to meet each parameter budget $\beta_k$, indicating that our DP-based algorithm accounts for the importance of each module and reduces its rank accordingly.
Interestingly, the third heatmap shows that the \texttt{c\_proj} of the central attention layers seems particularly important, as the algorithm truncates them only at the last.
{
\makeatletter
\renewcommand{\thesubfigure}{}
\renewcommand{\@thesubfigure}{}
\makeatother

\begin{figure}[h]
    \centering
    \vspace{-0.4cm}

    \subfigure[$0.80\times$]{%
        \includegraphics[width=0.24\linewidth]{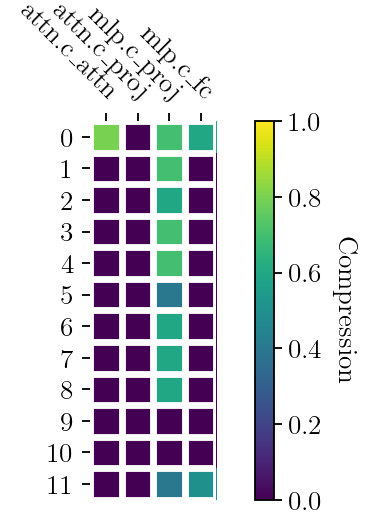}%
    }
    \subfigure[$0.60\times$]{%
        \includegraphics[width=0.24\linewidth]{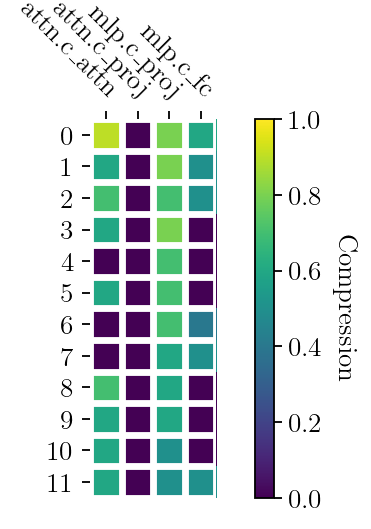}%
    }
    \subfigure[$0.40\times$]{%
        \includegraphics[width=0.24\linewidth]{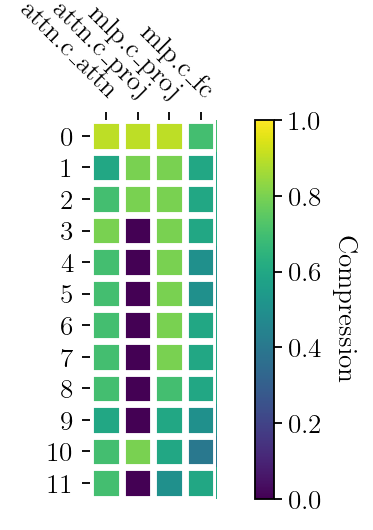}%
    }
    \subfigure[$0.20\times$]{%
        \includegraphics[width=0.24\linewidth]{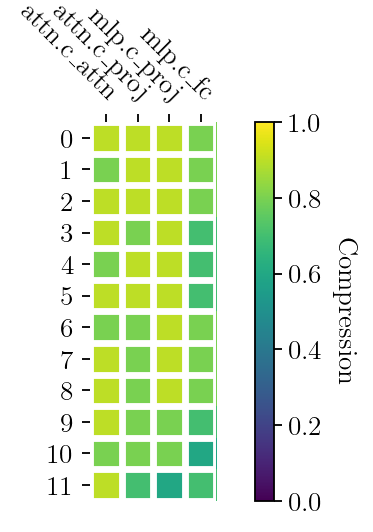}%
    }

    \caption{%
    \textbf{\tool takes into account parameter importance (GPT-2):}
    Heatmaps of compression ratio of model components over submodels labeled by relative size \wrt{} the full model.
    }
    \label{fig:abl_profiles}
    \vspace{-0.2cm}
\end{figure}
}
\paragraph{The limits of SVD-based initialization.}
In \cref{fig:abl_datasvd}, we study the effect of activation sample size on the DataSVD initialization of \tool. A few hundred samples suffice to capture the dominant activation directions, as using more than $128$ samples yields no visible gains. This suggests that, in DNNs, other sources of error dominate, limiting the impact of increasingly accurate per-layer decompositions.
\begin{figure}[t]
    \centering
    \setlength{\subfigcapskip}{-2pt}
    \vspace{-0.4cm}
    \subfigure[\label{fig:abl_datasvd}]{
        \includegraphics[width=0.47\linewidth]{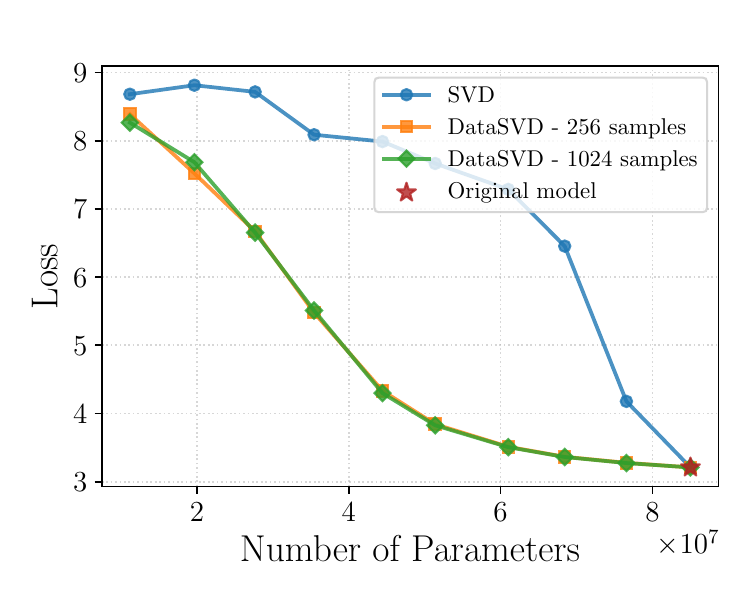}
    }
    \subfigure[\label{fig:submodel_vs_independent}]{
        \includegraphics[width=0.47\linewidth]{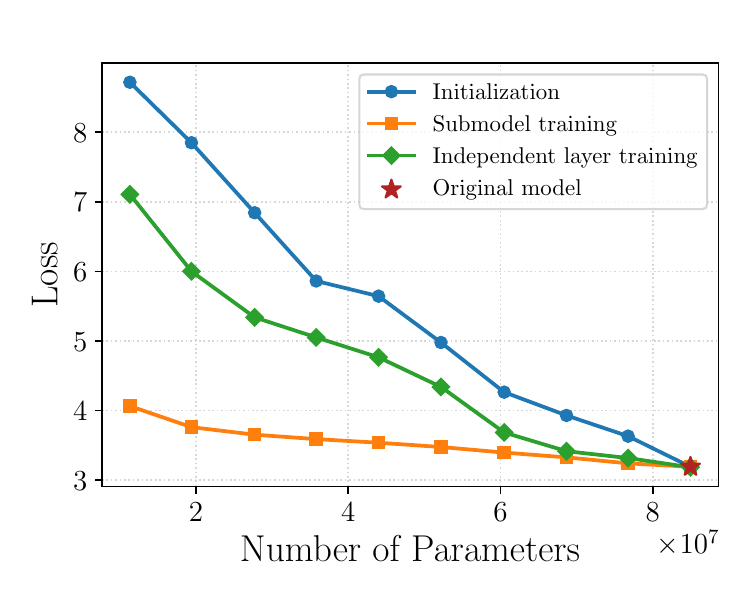}
    }
    \vspace{-0.3cm}
    \caption{%
    \textbf{Limits of SVD initialization, need for submodel training (GPT-2).}
    (a) Superimposed green and orange curves show DataSVD converging with a few hundred samples.
    (b) Independent layer training (green) is ineffective, while
    end-to-end submodel training (orange) consolidates local into global nestedness.
    }
    \label{fig:abl_and_submodel}
    \vspace{-0.3cm}
\end{figure}

\vspace{-0.2cm}
\paragraph{The need for training submodels}

In \cref{fig:submodel_vs_independent}, we analyze the residual errors after decomposing pretrained layers via DataSVD. If a ``global SVD'' decomposition across the entire architecture were possible, distillation would be unnecessary, and Pareto-optimal models would coincide with SVD truncations. However, in deep non-linear networks, performance degradation arises from (i) non-linear activation functions and (ii) complex information flow across depth. To isolate the former, we independently train each layer during distillation, enabling adaptation to its non-linearity. The persistently poor performance shows that inter-layer information flow induces non-trivial dynamics, requiring end-to-end training to consolidate \textit{local} (per-layer) nestedness into \textit{global} nestedness.
\vspace{-0.2cm}
\paragraph{The importance of submodel sampling.}
\begin{figure}[b]
    \centering
    \vspace{-0.5cm}
    \includegraphics[width=\linewidth]{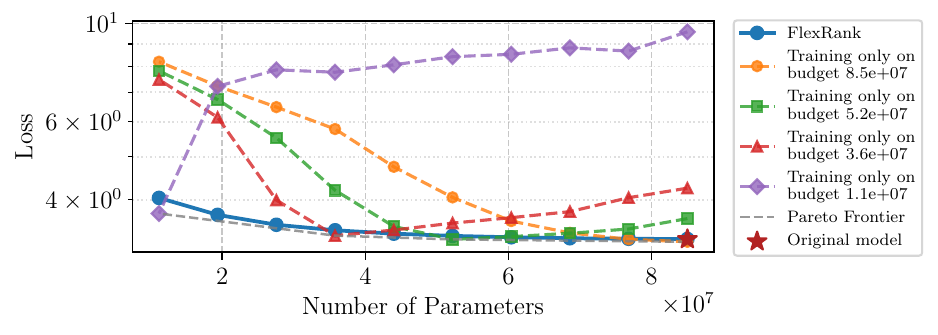}
    \caption{%
    \textbf{Joint submodel training is essential for elasticity (GPT-2):} independently trained submodels lack nested structure and degrade across parameter budgets.
    }
    \label{fig:independent_sub_train}
\end{figure}

To assess the need for nested submodel training, we compare \tool with a baseline that trains a single submodel for the full budget. As shown in \cref{fig:independent_sub_train}, independently trained submodels perform well only at their target budget and degrade markedly elsewhere, mirroring the PTS behavior in \cref{sec:theory:pts} and confirming that SGD alone does not support graceful rank reduction. In contrast, \tool matches the best independently trained submodel at each budget, indicating effective parameter sharing without interference.

\vspace{-0.2cm}
\section{Related Work}

\textbf{Model compression.} Deep neural networks typically show varying degrees of redundancy in their parametric knowledge. This overcommitment of representational capacity can manifest both in terms of numerical representation as well as knowledge superposition. Therefore, various quantization~\citep{lin2024awq,frantar2023optq,xiao2023smoothquant,liu2025paretoq,chen-etal-2025-efficientqat,ma2024era} and pruning techniques~\cite{sun2024wanda,frantar2023sparsegpt,kurtic2023ziplm} have been successfully applied to compress such models. However, these often requires access to the training set and/or specific hardware and kernel implementations to take advantage of the reduced compute.

\textbf{Low-rank methods.} Another approach is to leverage factorization methods to compress large models, without the need to alter dimensionality or hardware-awareness. Traditional approaches~\cite{hsu2022fwsvd,wang2021pufferfish,horvath2024maestro} have typically inherited a training-aware factorization workflow, which requires exposure to the full training process and datasets, which may not be tractable to many in the era of LLMs. As such, various techniques have been proposed for compressing pretrained LLMs~\cite{chen2021drone,wang2025svdllm,yuan2023asvd,genzel2025choose,wong2025a3} by means of decomposition and truncated reparametrization. Common among various of these methods is the activation-informed decomposition, with many requiring additional training steps to recover lost accuracy~\cite{chen2021drone,wang2025svdllm,yuan2023asvd}. However, very few~\cite{genzel2025choose} offer the ability of multi-model extraction, and less more so under a nested structure (see \cref{tab:lowrank_methods} for extended comparison).

\textbf{Flexible networks.} Another family of techniques
aims at producing many models from a common backbone (or supernet~\cite{cai2019once}), which are usually optimized together. The selected/sampled submodels can operate at different depths~\cite{Fan2020Reducing,raposo2024mixture}, widths~\cite{yu2018slimmable,devvrit2024matformer,horvath2021fjord,yu2019universally} or sub-architectures~\cite{cai2019once,cai2024flextron,cai2025llamaflex,horvath2024maestro} and can target inputs of varying difficulty~\cite{cai2025llamaflex}, different target devices~\cite{horvath2021fjord,lee2024recurrent} or even be used as draft models in speculative decoding~\cite{elhoushi2024layerskip}.
Concurrently, \citet{rauba2026deep} study rank-based elasticity: key differences lie in our superior decomposition (important for knowledge consolidation), submodel selection approximating the Pareto front, and in our \textsc{Gar} reparameterization (\cref{sec:method:gar}), crucially enabling much improved submodels without aggressive rank reductions.

To the best of our knowledge, \tool is the first work to lay the theoretical foundation for nested submodel training and to obtain flexible models at scale by operating in the factorized space.

\vspace{-0.2cm}
\section{Discussion \& Limitations}
\paragraph{Optimization and adaptivity.}
While \tool consistently improves the accuracy--cost trade-off over existing baselines, its performance is still constrained by the quality of the training set and the amount of consolidation training. Longer training, more diverse data, or optimization methods better tailored to jointly adapting nested submodels may further improve the Pareto frontier~\citep{jordan2024muon}. In addition, although \tool naturally enables budget-conditioned or input-adaptive inference, we do not evaluate adaptive routing policies in this work and leave this direction for future study.
\vspace{-0.2cm}
\paragraph{Training efficiency.}
Algorithmically, each \tool training step is comparable to standard distillation, but on fewer tokens and with an elastic factorized model. Since the factors span the full rank of the original matrices during training, they introduce roughly a $2\times$ memory overhead and preclude fused dense kernels, making high-rank training about $2\times$ slower than a single dense forward. However, for elastic deployment, training cost is often secondary to obtaining many deployable submodels without retraining from scratch~\citep{cai2019once}, as long as training the elastic model is comparable in terms of training effort~\citep{yu2018slimmable}. For example, Llama-3.2-1B was trained from the pretrained 8B model on a curated $9$T-token corpus, whereas \tool uses only $5$B tokens, making this cost appealing when amortized over many inference budgets.
\vspace{-0.2cm}
\paragraph{Inference efficiency.}
At inference time, theoretical savings translate more directly into speedups. Our measurements in \cref{sec:app:efficiency} show that, for sufficiently large matrices and sequences where computation is the bottleneck, the \textsc{Gar} form (\cref{sec:method:gar}) closes much of the gap between naive low-rank factorization and dense kernels, with forward cost closely following the theoretical predictions.
\vspace{-0.2cm}

\section{Conclusion}
\vspace{-0.2cm}
In this work, we introduced \tool, 
a rank-based elastic method that decomposes a pretrained model into nested, importance-ordered components within a single set of weights.
Combined with an efficient search and refinement procedure, \tool identifies submodels near the performance-latency Pareto frontier, without training multiple independent models or relying on architectural heuristics.
Across architectures, \tool yields smoother accuracy degradation and improves the efficiency–performance trade-off over existing approaches, enabling adaptive deployment over diverse hardware and workloads.
\vspace{-0.4cm}
\section*{Impact Statement}
\vspace{-0.2cm}
This paper presents work aimed at advancing the field of Machine Learning. There are many potential societal consequences of our work, none of which we feel must be specifically highlighted here.

\section*{Funding}
Riccardo Zaccone declares that financial support was received for the research, authorship, and/or publication of this article. This study was carried out within the project FAIR - Future Artificial Intelligence Research - and received funding from the European Union Next-GenerationEU [PIANO NAZIONALE DI RIPRESA E RESILIENZA (PNRR) – MISSIONE 4 COMPONENTE 2, INVESTIMENTO 1.3 – D.D. 1555 11/10/2022, PE00000013 - CUP: E13C22001800001]. This manuscript reflects only the authors’ views and opinions, neither the European Union nor the European Commission can be considered responsible for them. 
A part of the computational resources for this work was provided by hpc@polito, which is a Project of Academic Computing within the Department of Control and Computer Engineering at the Politecnico di Torino (\href{http://www.hpc.polito.it}{\texttt{http://www.hpc.polito.it}}). We acknowledge the CINECA award under the ISCRA initiative for the availability of high-performance computing resources. This work was supported by CINI.

\bibliography{main}
\bibliographystyle{icml2026}

\onecolumn
\appendix

\makeatletter
\let\addcontentsline\latexaddcontentsline
\makeatother

\crefalias{section}{appendix}
\crefalias{subsection}{appendix}

\etocdepthtag.toc{appendix}

\begingroup
\etocsettagdepth{appendix}{subsection}
\etocsettocstyle{\section*{Appendix Contents}}{}
\etoctableofcontents
\endgroup

\clearpage

\section{Additional Discussion}

\subsection{Relation to Other Flexible Non-Factorized Methods}
\label{app:flexible_architectures}

We briefly discuss two recent works, \textsc{FlexTron} \citep{cai2024flextron} and \textsc{LlamaFlex} \citep{cai2025llamaflex}, which are relevant in spirit but orthogonal to the focus of this paper. We do not compare against these methods experimentally, as they do not address low-rank approximation of weight matrices, which is the central mechanism studied in \tool. Moreover, to the best of our knowledge, neither work has released an implementation, and \textsc{LLaMaFlex} in particular trains on proprietary data, limiting direct comparison.

Both \textsc{FlexTron} and \textsc{LlamaFlex} construct elastic Transformer models by varying architectural components, including the number of Transformer blocks, the hidden dimension size, the intermediate MLP dimension, and the number of attention heads. In contrast, \tool operates entirely in the factorized parameter space, producing elastic submodels by truncating low-rank representations while preserving the original architecture. Extending \tool to reason over architectural elasticity would be an interesting direction for future work.

From an optimization perspective, these methods are also closely related to the training paradigms analyzed in \cref{sec:theory}. In particular, \textsc{LlamaFlex} can be viewed as an instance of all-submodel learning (ASL), where multiple elastic configurations are optimized simultaneously without enforcing global nestedness across configurations. Similarly, \textsc{FlexTron} resembles a post-training selection (PTS) approach, in which a large super-network is trained, and submodels are selected afterward via routing decisions. Our theoretical results suggest that both strategies can lead to suboptimal Pareto fronts in the absence of fully nested training objectives.

These connections suggest promising future directions. On the one hand, it would be interesting to investigate whether enforcing global nestedness, as proposed in \tool, could further improve methods such as \textsc{LlamaFlex}, which currently enforce nestedness only at the component level. On the other hand, both \textsc{FlexTron} and \textsc{LlamaFlex} rely on learned routers for subnetwork selection, whereas \tool employs a deterministic dynamic programming procedure. Understanding how router-based selection compares to, or could be combined with, DP-based Pareto selection is an open and compelling direction for future work.
\begin{table*}[h]
\centering
\caption{
Comparison of prior Transformer compression methods from the perspective of nested low-rank decomposition.
}
\resizebox{\textwidth}{!}{
\begin{tabular}{l l l c c c c C{2.8cm}}
\toprule
\textbf{Method} &
\textbf{Decomposition} &
\textbf{Rank Selection} &
\textbf{Target Arch.} &
\textbf{Acc. Compensation} &
\textbf{Gradient-Free} &
\textbf{Nestedness} &
\textbf{\mbox{Train-once}, \mbox{deploy-everywhere}} \\
\midrule

Naive SVD &
Weight SVD & Manual & Any linear &
\xmark &
\cmark &
\xmark &
\xmark \\

\makecell[l]{FWSVD \\ \footnotesize \citep{hsu2022fwsvd}} &
Fisher-weighted SVD & $r=\lfloor0.33 \text{min}(N,M) \rfloor$  & Any linear &
\xmark &
\xmark &
\xmark &
\xmark \\

\makecell[l]{DRONE \\ \footnotesize \citep{chen2021drone}} &
Data-informed SVD & Greedy layer-by-layer  & Any linear & 
1 epoch retrain &
\xmark &
\xmark &
\xmark \\

\makecell[l]{ASVD \\ \footnotesize \cite{yuan2023asvd}} &
Activation-scaled SVD & Layer-wise calibration & Any linear &
\xmark &
\cmark &
\xmark &
\xmark \\

\makecell[l]{SVD-LLM \\ \footnotesize \cite{wang2025svdllm}} &
Whitened activations informed SVD & $r=\frac{NM}{N+M}(1-R_w)$ & Any linear &
LoRA repair &
\xmark &
\xmark &
\xmark \\

\makecell[l]{SVD-LLM V2 \\ \footnotesize \cite{wang2025svd}} &
Double SVD & Adaptive $R_w$ & Transformer &
LoRA repair &
\xmark &
\xmark &
\xmark \\
 
\makecell[l]{A$^3$ \\ \footnotesize \cite{wong2025a3}} &
Attention activation informed SVD & Uniform & Transformer & 
\xmark &
\cmark &
\xmark &
\xmark \\

\makecell[l]{ACIP \\ \footnotesize \cite{genzel2025choose}} &
Weight-SVD + masking & Binary mask & Any linear &
LoRA repair&
\xmark &
\xmark &
\cmark \\

\midrule

\textbf{\tool (ours)} &
Online whitened data informed SVD  & Pareto optimal & Any linear &
Distillation &
\xmark &
\cmark &
\cmark \\

\bottomrule
\end{tabular}
}
\label{tab:lowrank_methods}
\end{table*}
\newpage
\section{Proofs}
\label{sec:proofs}
We first recall here the necessary notation. Consider a matrix $M^\star \in \mathbb{R}^{m \times n}$, parameterized as $ M = U V^\top$ with $U \in \mathbb{R}^{m \times k}$, $V \in \mathbb{R}^{n \times k}$, and $ k = \min(m,n)$. Assume it has singular value decomposition $M^\star = P\Sigma Q^\top$ satisfying:
\begin{align}
    \Sigma=\mathrm{diag}(\sigma_1,\dots,\sigma_k),&\quad&
    \sigma_i>\sigma_{i+1}>0 \quad\forall i<k, 
\end{align}
where $P\in\mathbb{R}^{m\times k}$ and $Q\in\mathbb{R}^{n\times k}$ have orthonormal columns. We call $A_r$ the $r$-truncation of $M^\star$.

\subsection{Assumptions}
\begin{assumption}
    \label{assumption:gd_prop_random_init}
    Let $(U_0, V_0)$ be a random initialization for $(U,V)$. Then assume $(U_0, V_0)$ has a density \wrt{} the Lebesque measure, \ie{}, $(U_0, V_0)$ is a continuous random variable which has a probability density function (PDF).
\end{assumption}
\begin{assumption}
    \label{assumption:gd_prop_con_optimum}
    Let $S_r \subseteq [k], |S_r|=r$ the set of $r$ nonzero columns of a matrix and call $\Pi_{S_r}$ the diagonal projector onto coordinates in $S_r$, \ie{} $(\Pi_{S_r})_{jj}=1$ iff $j \in S_r$, otherwise $(\Pi_{S_r})_{jj}=0$.  
    Then assume gradient descent (GD) converges to a global minimizer for the problem $\arg\min_{U, V}\|U \Pi_{S_r}V^\top-A_r\|_{F}^2, \quad \forall r \in \{1,\dots,k\}$.
\end{assumption}

\subsection{Auxiliary Lemmas}
\begin{lemma}[Objective equivalence without empty mask in ASL]\label{lem:masks-equivalence}
Let $\mathcal{L}_{1}(\cdot)$ and $\mathcal{L}_{2}(\cdot)$ be defined as:
\begin{align*}
    \mathcal{L}_{1}(U,V) := \frac{1}{2^k-1}\sum_{\substack{S\subseteq[k] \\ S \neq \emptyset}}\|U\Pi_SV^\top-M^\star\|_F^2,
    \qquad
    \mathcal{L}_{2}(U,V) := \frac{1}{2^k}\sum_{S\subseteq[k]}\|U\Pi_SV^\top-M^\star\|_F^2.
\end{align*}
Then $\mathcal{L}_{1}$ and $\mathcal{L}_{2}$ have the same set of minimizers, as the following holds:
\begin{equation*}
    \mathcal{L}_{1}(U,V) = \frac{2^k}{2^k-1}\,\mathcal{L}_{2}(U,V) - \frac{1}{2^k-1}\,\|M^\star\|_F^2.
\end{equation*}
\end{lemma}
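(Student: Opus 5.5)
The identity follows by isolating the empty-mask term in the sum defining $\mathcal{L}_2$; after that, only a rearrangement of scalars is needed, and the statement about minimizers is immediate.

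First, $\Pi_\emptyset = 0$, because every diagonal entry $s_i = \mathbf{1}(i\in\emptyset)$ vanishes. Hence $U\Pi_\emptyset V^\top = 0$ for every $(U,V)$, and the empty-set summand equals $\|M^\star\|_F^2$. This is a constant independent of the parameters. Splitting the sum over all $S\subseteq[k]$ into $S=\emptyset$ and $S\neq\emptyset$ gives
\begin{equation*}
2^k\,\mathcal{L}_2(U,V) = \|M^\star\|_F^2 + \sum_{\substack{S\subseteq[k] \\ S\neq\emptyset}}\|U\Pi_SV^\top-M^\star\|_F^2 = \|M^\star\|_F^2 + (2^k-1)\,\mathcal{L}_1(U,V).
\end{equation*}
Solving for $\mathcal{L}_1$ yields the claimed relation $\mathcal{L}_1 = \frac{2^k}{2^k-1}\mathcal{L}_2 - \frac{1}{2^k-1}\|M^\star\|_F^2$.

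Second, the relation says $\mathcal{L}_1 = a\,\mathcal{L}_2 + b$ with $a = \frac{2^k}{2^k-1} > 0$ and $b$ constant in $(U,V)$. A strictly increasing affine map preserves the ordering of objective values. So $(U,V)$ minimizes $\mathcal{L}_1$ if and only if it minimizes $\mathcal{L}_2$, and the two argmin sets coincide; the same holds for local minimizers and stationary points.

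No step is a genuine obstacle. The only point needing care is the convention $\Pi_\emptyset=0$, which follows directly from the definition of $\Pi_S$ and makes the empty-mask term parameter-free.
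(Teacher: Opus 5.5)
Your proof is correct and follows the paper's argument exactly. You isolate the empty-mask term, note that $\Pi_\emptyset = 0$ so that term is the constant $\|M^\star\|_F^2$, and rearrange; your remark that a positive affine rescaling preserves the argmin set simply makes explicit the step the paper leaves implicit.
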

\begin{proof}
The only additional term in $\mathcal{L}_{2}$ is the empty mask $S=\emptyset$, for which $U\Pi_SV^\top=0$ and
$\|U\Pi_SV^\top-M^\star\|_F^2=\|M^\star\|_F^2$. Rearranging the terms yields the identity.
\end{proof}

\begin{lemma}[Rank-dropout objective expansion]\label{lem:dropout-expansion}
Let $z=(z_1,\dots,z_k)$ have i.i.d.\ entries $z_j \sim \mathrm{Bernoulli}(1/2)$, and let $\Pi_z := \mathrm{diag}(z)$. 
Write $U=[u_1,\dots,u_k]$ and $V=[v_1,\dots,v_k]$. Then:
\begin{equation}\label{eq:dropout-expansion}
    \mathbb{E}_z \bigl\| U \Pi_z V^\top - M^\star \bigr\|_F^2
    =
    \frac{1}{4}\bigl\| U V^\top - 2 M^\star \bigr\|_F^2
    + \frac{1}{4}\sum_{j=1}^k \|u_j\|_2^2\,\|v_j\|_2^2 .
\end{equation}
\end{lemma}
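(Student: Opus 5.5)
The plan is a direct bias--variance computation. Write the masked product as a sum of rank-one terms, $U\Pi_z V^\top=\sum_{j=1}^k z_j\,u_jv_j^\top$. Since $\mathbb{E}[z_j]=\tfrac12$, its mean is $\mathbb{E}_z[U\Pi_zV^\top]=\tfrac12 UV^\top$.

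Decompose the error around this mean:
\begin{equation*}
U\Pi_zV^\top-M^\star=\Bigl(\tfrac12 UV^\top-M^\star\Bigr)+\sum_{j=1}^k\Bigl(z_j-\tfrac12\Bigr)u_jv_j^\top .
\end{equation*}
Expand the squared Frobenius norm. The cross term between the deterministic part and the fluctuation vanishes in expectation, because $\mathbb{E}[z_j-\tfrac12]=0$. The deterministic part contributes
\begin{equation*}
\bigl\|\tfrac12 UV^\top-M^\star\bigr\|_F^2=\tfrac14\|UV^\top-2M^\star\|_F^2,
\end{equation*}
which is the first term of \eqref{eq:dropout-expansion}.

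For the fluctuation part, expand
\begin{equation*}
\Bigl\|\sum_j (z_j-\tfrac12)u_jv_j^\top\Bigr\|_F^2=\sum_{j,l}(z_j-\tfrac12)(z_l-\tfrac12)\,\langle u_jv_j^\top,u_lv_l^\top\rangle_F .
\end{equation*}
By independence, the off-diagonal expectations $\mathbb{E}[(z_j-\tfrac12)(z_l-\tfrac12)]$ vanish for $j\neq l$. The diagonal terms have $\mathrm{Var}(z_j)=\tfrac14$. Using $\|u_jv_j^\top\|_F^2=\mathrm{tr}(v_ju_j^\top u_jv_j^\top)=\|u_j\|_2^2\|v_j\|_2^2$, the fluctuation part contributes $\tfrac14\sum_j\|u_j\|_2^2\|v_j\|_2^2$. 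Summing the two contributions gives \eqref{eq:dropout-expansion}.

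There is no real obstacle here. The only care needed is bookkeeping of the cross terms. One uses linearity of expectation, the independence of the $z_j$, and the identity $\langle ab^\top,cd^\top\rangle_F=(a^\top c)(b^\top d)$ for the diagonal norms.

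As a remark on how this will be used, the uniform average over all $2^k$ masks equals $\mathbb{E}_z$ with i.i.d. $\mathrm{Bernoulli}(1/2)$ entries. So via \cref{lem:masks-equivalence}, this expansion rewrites the ASL objective \eqref{eq:asl} as a nuclear-norm-type regularized problem. That regularized form is what leads to the $\lambda=\tfrac1k\|UV^\top\|_\star$ in \cref{thm:asl_main}.
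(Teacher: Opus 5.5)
Your proof is correct and is essentially the paper's argument: both write $U\Pi_zV^\top=\sum_j z_j u_jv_j^\top$, expand the squared Frobenius norm as a double sum using $\langle u_iv_i^\top,u_jv_j^\top\rangle=(u_i^\top u_j)(v_i^\top v_j)$, and use independence of the $z_j$. The only difference is cosmetic: you center around the mean $\tfrac12 UV^\top$ first, so only the variances $\tfrac14$ survive, whereas the paper uses the raw moments $\mathbb{E}[z_i^2]=\tfrac12$ and $\mathbb{E}[z_iz_j]=\tfrac14$ and completes the square at the end.
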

\begin{proof}
Let $W(z) := U \Pi_z V^\top$. Expanding the square leads to:
\begin{equation}\label{eq:basic-expansion}
    \|W(z)-M^\star\|_F^2
    =
    \|W(z)\|_F^2
    + \|M^\star\|_F^2
    - 2 \langle W(z), M^\star\rangle .
\end{equation}
Since $\mathbb{E}[\Pi_z]=\tfrac{1}{2}\mathbf{I}_k$, we have that:
\begin{equation}\label{eq:mean-term}
    \mathbb{E}_z[W(z)] = \tfrac{1}{2} U V^\top,
    \qquad
    \mathbb{E}_z \langle W(z), M^\star\rangle
    = \tfrac{1}{2}\langle U V^\top, M^\star\rangle .
\end{equation}
To evaluate the quadratic term, note that $W(z) = \sum_{j=1}^k z_j\, u_j v_j^\top$. Therefore,
\begin{align}
    \|W(z)\|_F^2
    = \biggl\langle \sum_{i=1}^k z_i\, u_i v_i^\top,\;
                    \sum_{j=1}^k z_j\, u_j v_j^\top \biggr\rangle \notag
    &= \sum_{i=1}^k \sum_{j=1}^k
    z_i z_j \,\langle u_i v_i^\top,\, u_j v_j^\top\rangle \notag\\
    &= \sum_{i=1}^k \sum_{j=1}^k
    z_i z_j \,\mathrm{tr}\!\left(v_i u_i^\top u_j v_j^\top\right) \notag\\
    &= \sum_{i=1}^k \sum_{j=1}^k
    z_i z_j \,(u_i^\top u_j)(v_i^\top v_j),
    \label{eq:quadratic}
\end{align}
where we used $\langle A,B\rangle=\mathrm{tr}(A^\top B)$ and the cyclicity of the trace.
Using $\mathbb{E}[z_i^2]=\tfrac{1}{2}$ and $\mathbb{E}[z_i z_j]=\tfrac{1}{4}$ for $i\neq j$, we obtain:
\begin{equation}\label{eq:expected-norm}
    \mathbb{E}_z \|W(z)\|_F^2
    =
    \frac{1}{4}\|U V^\top\|_F^2
    + \frac{1}{4}\sum_{j=1}^k \|u_j\|_2^2\,\|v_j\|_2^2 .
\end{equation}
Substituting \eqref{eq:mean-term} and \eqref{eq:expected-norm} into \eqref{eq:basic-expansion} yields:
\begin{align*}
    \mathbb{E}_z \|W(z)-M^\star\|_F^2 &=
    \frac{1}{4}\|U V^\top\|_F^2
    + \|M^\star\|_F^2
    - \langle U V^\top, M^\star\rangle
    + \frac{1}{4}\sum_{j=1}^k \|u_j\|_2^2\,\|v_j\|_2^2 \\
    &=\frac{1}{4}\|U V^\top - 2M^\star\|_F^2 + \frac{1}{4}\sum_{j=1}^k \|u_j\|_2^2\,\|v_j\|_2^2
\end{align*}
\end{proof}

\begin{lemma}[Balanced factorization penalty]\label{lem:balanced-penalty}
Fix $W \in \mathbb{R}^{m \times n}$ with $\mathrm{rank}(W) \le k$.  
Define the balanced factorization penalty
\begin{equation}\label{eq:balanced-penalty-def}
\mathcal{F}_k(W):=\min_{\substack{U \in \mathbb{R}^{m \times k},\, V \in \mathbb{R}^{n \times k} \\ UV^\top = W}}
\ \sum_{j=1}^k \|u_j\|_2^2 \, \|v_j\|_2^2,
\end{equation}
which measures the minimal columnwise energy required to represent $W$ using $k$ rank-one components. Then:
\begin{equation}\label{eq:balanced-penalty-value}
\mathcal{F}_k(W)
=
\frac{1}{k}\,\|W\|_\star^2 .
\end{equation}
Moreover, any minimizer $(U,V)$ satisfies:
\begin{equation}\label{eq:balanced-equalities}
\|u_j\|_2 \, \|v_j\|_2 = 
\frac{1}{k}\,\|W\|_\star \quad \text{for all } j \in [k], \qquad
\sum_{j=1}^k \|u_j\|_2 \, \|v_j\|_2 = \|W\|_\star .
\end{equation}
\end{lemma}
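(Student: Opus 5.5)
The plan is a lower bound followed by a matching construction. Both rest on the inequality $\|W\|_\star \le \sum_j \|u_j\|_2\|v_j\|_2$, valid for any factorization $W=UV^\top=\sum_j u_j v_j^\top$.

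\textbf{Lower bound.} For any feasible $(U,V)$ with $UV^\top=W$, the triangle inequality for the nuclear norm gives
\[
\|W\|_\star \le \sum_{j=1}^k \|u_jv_j^\top\|_\star = \sum_{j=1}^k \|u_j\|_2\|v_j\|_2 ,
\]
since a rank-one matrix has nuclear norm $\|u_j\|_2\|v_j\|_2$. Set $a_j:=\|u_j\|_2\|v_j\|_2\ge 0$. Cauchy--Schwarz gives $\bigl(\sum_j a_j\bigr)^2 \le k\sum_j a_j^2$, hence
\[
\sum_j a_j^2 \ge \tfrac1k\Bigl(\sum_j a_j\Bigr)^2 \ge \tfrac1k\|W\|_\star^2 ,
\]
so $\mathcal{F}_k(W)\ge \tfrac1k\|W\|_\star^2$.

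\textbf{Attainment.} This is the main obstacle: we must split $W$ into exactly $k$ rank-one pieces that have equal products $a_j$ and whose $a_j$ sum to $\|W\|_\star$. Summing to $\|W\|_\star$ requires the rank-one pieces to be "aligned". Write the compact SVD $W=P_0\Sigma_0Q_0^\top$ with rank $\rho\le k$, pad $P_0,Q_0$ to $k$ columns with extra zero singular values, and set
\[
\tilde P:=P_0\Sigma_0^{1/2}\in\mathbb{R}^{m\times k},\qquad \tilde Q:=Q_0\Sigma_0^{1/2}\in\mathbb{R}^{n\times k},
\]
so that $\tilde P\tilde Q^\top=W$. The $j$-th column pair of $(\tilde P O,\tilde Q O)$, for any orthogonal $O\in\mathbb{R}^{k\times k}$, also factorizes $W$. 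Its column norms are
\[
\|\tilde P o_j\|_2^2=\|\tilde Q o_j\|_2^2=o_j^\top\Sigma_0 o_j ,
\]
because $P_0,Q_0$ have orthonormal columns on the support (the padded columns contribute zero). The product is therefore $a_j=o_j^\top\Sigma_0 o_j$, and
\[
\sum_j a_j=\mathrm{tr}(\Sigma_0)=\|W\|_\star .
\]
It remains to pick $O$ so that every diagonal entry of $O^\top\Sigma_0O$ equals $\mathrm{tr}(\Sigma_0)/k$. Such an $O$ exists by the Schur--Horn theorem, since the constant vector is majorized by the eigenvalues. A concrete choice is a real orthogonal matrix whose entries all have modulus $1/\sqrt{k}$ (a normalized Hadamard matrix), or a real rotation built from the DFT. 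If no Hadamard matrix of size $k$ exists, I would fall back on the standard inductive Givens-rotation argument that equalizes the diagonal entries one pair at a time. This yields $a_j=\|W\|_\star/k$ for all $j$, so $\sum_j a_j^2=\|W\|_\star^2/k$, and the minimum equals $\tfrac1k\|W\|_\star^2$.

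\textbf{Characterization of minimizers.} If $(U,V)$ attains the minimum, both inequalities in the chain must be equalities. Equality in Cauchy--Schwarz forces all $a_j$ to be equal, say $a_j=c$. Then $kc^2=\tfrac1k\|W\|_\star^2$ gives $c=\|W\|_\star/k$, because $c\ge 0$. Consequently $\sum_j a_j=\|W\|_\star$, which is exactly \eqref{eq:balanced-equalities}.
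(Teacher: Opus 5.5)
Your proposal is correct and follows the paper's proof essentially step for step: the nuclear-norm triangle inequality plus Cauchy--Schwarz for the lower bound, a Schur--Horn rotation $O$ of the balanced SVD factors $(P_0\Sigma_0^{1/2}O,\,Q_0\Sigma_0^{1/2}O)$ for attainment, and the equality cases of the chain to characterize minimizers. One minor inaccuracy does not affect the argument: a real orthogonal matrix with all entries of modulus $1/\sqrt{k}$ is exactly a normalized Hadamard matrix, so none exists for e.g.\ $k=3$, and a DFT-derived real rotation does not supply one in general; your Schur--Horn/Givens argument already covers every $k$.
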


\begin{proof}
Let $W = \sum_{j=1}^k u_j v_j^\top$ be any feasible factorization and define
\begin{equation*}
    a_j := \|u_j\|_2 \, \|v_j\|_2 = \|u_j v_j^\top\|_\star .
\end{equation*}
By the triangle inequality for the nuclear norm, it holds that:
\begin{equation}\label{eq:nuclear-triangle}
\|W\|_\star = \Bigl\|\sum_{j=1}^k u_j v_j^\top\Bigr\|_\star
            \le
            \sum_{j=1}^k a_j .
\end{equation}
Applying the Cauchy--Schwarz inequality yields:
\begin{equation}\label{eq:cs-step}
\sum_{j=1}^k a_j^2 \ge \frac{1}{k}\Bigl(\sum_{j=1}^k a_j\Bigr)^2
                   \ge \frac{1}{k}\,\|W\|_\star^2 .
\end{equation}
Since $\sum_j a_j^2 = \sum_j \|u_j\|_2^2 \, \|v_j\|_2^2$, this shows:
\begin{equation}
    \mathcal{F}_k(W) \ge \frac{1}{k}\,\|W\|_\star^2 .
\end{equation}
To show that the bound is tight, let $W = P\,\mathrm{diag}(s_1, \dots,s_k)\,Q^\top$ be an SVD of $W$, padded with zeros if necessary, so that
$\|W\|_\star = \sum_{i=1}^k s_i$. By the Schur--Horn theorem, there exists an orthogonal matrix $R \in \mathbb{R}^{k \times k}$ such that the diagonal entries of $R^\top \mathrm{diag}(s) R$ are all equal to:
\begin{equation*}
    \bar{s} := \frac{1}{k}\sum_{i=1}^k s_i = \frac{1}{k}\,\|W\|_\star .
\end{equation*}
Intuitively, this rotation redistributes the singular values uniformly across the $k$ columns. Define:
\begin{equation*}
    U := P\,\mathrm{diag}(\sqrt{s})\,R, \qquad V := Q\,\mathrm{diag}(\sqrt{s})\,R .
\end{equation*}
Then $UV^\top = W$ and, for each $j \in [k]$,
\begin{equation*}
    \|u_j\|_2^2 = (U^\top U)_{jj} 
                = (R^\top \mathrm{diag}(s) R)_{jj} 
                = \bar{s}, \qquad
    \|v_j\|_2^2 = \bar{s}.
\end{equation*}
Consequently:
\begin{equation*}
    \sum_{j=1}^k \|u_j\|_2^2 \, \|v_j\|_2^2 = k\,\bar{s}^2
                                            = \frac{1}{k}\,\|W\|_\star^2 ,
\end{equation*}
which proves $\mathcal{F}_k(W) \le \frac{1}{k}\,\|W\|_\star^2$.
Finally, equality in \eqref{eq:cs-step} requires $a_1=\cdots=a_k$, while equality in \eqref{eq:nuclear-triangle} forces additivity, \ie{},
$\sum_{j=1}^k a_j = \|W\|_\star$. Recalling $a_j=\|u_j\|_2\|v_j\|_2$ yields \eqref{eq:balanced-equalities}.
\end{proof}

\begin{lemma}[Spectral form of the minimizer of ASL objective]\label{lem:spectral-solution}
Let $M^\star = P \Sigma Q^\top$ be the singular value decomposition of $M^\star$,
with $\Sigma = \mathrm{diag}(\sigma_1,\dots,\sigma_k)$ and $\sigma_i \ge 0$.
Consider the objective
\begin{equation}\label{eq:phi-def}
    \Phi(W)
    =
    \frac{1}{4}\|W-2M^\star\|_F^2
    +
    \frac{1}{4k}\|W\|_\star^2 .
\end{equation}
The function $\Phi$ admits a unique global minimizer $W^\star$.
Moreover, there exists a minimizer whose left and right singular subspaces
coincide with those of $M^\star$, and such a minimizer can be written as:
\begin{equation}\label{eq:spectral-form}
    W^\star = P\,\mathrm{diag}(w_1,\dots,w_k)\,Q^\top ,
\end{equation}
where the singular values $w_i \ge 0$ are uniquely determined by:
\begin{equation}\label{eq:soft-threshold}
    w_i = \max\!\left(0,\,2\sigma_i-\lambda\right),
    \qquad i = 1,\dots,k,
\end{equation}
with $\lambda$ satisfying the consistency condition
\begin{equation}\label{eq:lambda-def}
    \lambda = \frac{1}{k}\sum_{j=1}^k w_j .
\end{equation}
\end{lemma}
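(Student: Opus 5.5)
Since $\Phi$ is a sum of a $\tfrac12$-strongly convex quadratic $\tfrac14\|W-2M^\star\|_F^2$ and the convex function $\tfrac{1}{4k}\|W\|_\star^2$ (the square of a nonnegative convex function is convex), $\Phi$ is strongly convex and coercive on $\mathbb{R}^{m\times n}$. A minimizer therefore exists and is unique. The rest of the argument identifies its form. The plan is to reduce the problem to singular values via von Neumann's trace inequality, and then solve a strongly convex problem on vectors.

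\textbf{Reduction to singular values.} Write an arbitrary $W$ with singular values $w_1\ge\dots\ge w_k\ge0$ and expand
\[
\Phi(W)=\tfrac14\|W\|_F^2-\langle W,M^\star\rangle+\|M^\star\|_F^2+\tfrac{1}{4k}\|W\|_\star^2 .
\]
Here $\|W\|_F^2=\sum_i w_i^2$ and $\|W\|_\star=\sum_i w_i$ depend only on the spectrum. By von Neumann's inequality, $\langle W,M^\star\rangle\le\sum_i w_i\sigma_i$, with equality when $W=P\,\mathrm{diag}(w)\,Q^\top$. Hence $\Phi(W)\ge\Phi(P\,\mathrm{diag}(w)Q^\top)$, so the aligned matrix is at least as good. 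Since the minimizer is unique, it must itself be of the form \eqref{eq:spectral-form}. (Alternatively, the minimizer is equivariant under the symmetries $(P,Q)$ of $M^\star$, which gives the same conclusion.)

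\textbf{Vector problem and KKT.} It remains to minimize
\[
\phi(w)=\tfrac14\sum_i (w_i-2\sigma_i)^2+\tfrac1{4k}\Bigl(\sum_i w_i\Bigr)^2
\]
over $w\ge0$. This is strongly convex, so its KKT conditions are necessary and sufficient. Stationarity in coordinate $i$ reads $\tfrac12(w_i-2\sigma_i)+\tfrac{1}{2k}\sum_j w_j-\mu_i=0$, with $\mu_i\ge0$ and $\mu_iw_i=0$. Setting $\lambda:=\tfrac1k\sum_j w_j$, we get $w_i=2\sigma_i-\lambda$ whenever $w_i>0$, and $2\sigma_i-\lambda\le0$ whenever $w_i=0$. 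This is exactly $w_i=\max(0,2\sigma_i-\lambda)$ together with the consistency condition \eqref{eq:lambda-def}.

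Because the $\sigma_i$ are nonincreasing, so are the $w_i$, and the ordering is consistent with $w$ being a genuine singular-value vector.

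\textbf{Uniqueness of $\lambda$.} Consider $g(\lambda)=\lambda-\tfrac1k\sum_j\max(0,2\sigma_j-\lambda)$ on $\lambda\ge0$. It is continuous and strictly increasing, with $g(0)=-\tfrac{2}{k}\sum_j\sigma_j\le0$ and $g(\lambda)\to\infty$. So it has a unique root. This matches the uniqueness of the minimizer of $\phi$.

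\textbf{Main obstacle.} The delicate step is the von Neumann reduction. One must ensure that the ordering of the $w_i$ is compatible with the descending $\sigma_i$, and handle ties or zero singular values, where $P$ and $Q$ are not unique; this is why the statement says ``there exists'' a minimizer of that form. I would handle it by invoking uniqueness of the minimizer only after exhibiting an aligned candidate that attains the infimum.
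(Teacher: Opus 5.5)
Your proposal is correct and follows essentially the same route as the paper: expand the Frobenius term, use von Neumann's trace inequality to reduce to aligned matrices $P\,\mathrm{diag}(w)\,Q^\top$, then solve the strictly convex vector problem $\phi(w)$ over $w\ge 0$ via KKT to obtain $w_i=\max(0,2\sigma_i-\lambda)$ with $\lambda=\frac{1}{k}\sum_j w_j$. Your two additions make the uniqueness claims slightly more complete than the paper's argument: strong convexity of $\Phi$ on the whole matrix space gives uniqueness of $W^\star$ before the reduction (the paper only argues strict convexity of the reduced $\phi$), and the strictly increasing function $g$ pins down $\lambda$ directly.
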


\begin{proof}
Expanding the Frobenius norm in \eqref{eq:phi-def} yields
\begin{equation}\label{eq:phi-expanded}
    \Phi(W) = \frac{1}{4}\|W\|_F^2 - \langle W, M^\star\rangle + \frac{1}{4k}\|W\|_\star^2 + \|M^\star\|_F^2 .
\end{equation}
The terms $\|W\|_F^2$ and $\|W\|_\star$ depend only on the singular values of $W$. Let $W=\tilde P\,\mathrm{diag}(w)\,\tilde Q^\top$ be an arbitrary SVD.
For fixed $w$, the inner product $\langle W,M^\star\rangle$ is maximized,
by the von Neumann trace inequality, when the singular subspaces of $W$
align with those of $M^\star$.
Since $\Phi(W)$ contains $-\langle W,M^\star\rangle$, for any fixed singular values the objective is minimized when the singular subspaces of $W$ align with those of $M^\star$.
Restricting to this form yields the reduced problem:
\begin{equation}\label{eq:reduced-problem}
    \min_{w \ge 0}
    \phi(w) := \frac{1}{4}\sum_{i=1}^k (w_i-2\sigma_i)^2 + \frac{1}{4k}\left(\sum_{j=1}^k w_j\right)^2 .
\end{equation}
The function $\phi$ is strictly convex on $\mathbb{R}^k$ and therefore admits a unique minimizer.
Introduce Lagrange multipliers $\mu_i \ge 0$ associated with the constraints $w_i \ge 0$. The Lagrangian is:
\begin{equation}\label{eq:lagrangian}
    \mathcal{L}(w,\mu) = \frac{1}{4}\sum_{i=1}^k (w_i-2\sigma_i)^2  +  \frac{1}{4k}\left(\sum_{j=1}^k w_j\right)^2 - \sum_{i=1}^k \mu_i w_i .
\end{equation}
Stationarity with respect to $w_i$ yields:
\begin{equation}\label{eq:stationarity}
    \frac{1}{2}(w_i-2\sigma_i) + \frac{1}{2k}\sum_{j=1}^k w_j - \mu_i = 0.
\end{equation}
Defining $\lambda := \tfrac{1}{k}\sum_{j=1}^k w_j$, and using feasibility $w_i,\mu_i \ge 0$ together with complementary slackness $\mu_i w_i = 0$, the KKT conditions are equivalent to:
\begin{equation}\label{eq:kkt-solution}
    w_i = \max\!\left(0,\,2\sigma_i-\lambda\right),
    \qquad i = 1,\dots,k .
\end{equation}
Summing \eqref{eq:kkt-solution} over $i$ enforces the consistency condition
\eqref{eq:lambda-def}, completing the proof.
\end{proof}

\begin{theorem}[Spectral interference and impossibility of perfect recovery]
\label{thm:alg2-interference}
Let $M^\star = P\Sigma Q^\top$ have singular values
$\sigma_1 > \sigma_2 > \cdots > \sigma_k > 0$,
and let $W^\star$ be the unique minimizer of $\Phi(W)$ as defined in \cref{lem:spectral-solution}. Then $W^\star = M^\star$ if and only if
\begin{equation}\label{eq:isotropic-condition}
    \sigma_1 = \sigma_2 = \cdots = \sigma_k .
\end{equation}
Consequently, under the standing assumption $\sigma_1 > \cdots > \sigma_k$,
every global minimizer $(U,V)$ of ASL satisfies:
\begin{equation}\label{eq:nonzero-error}
    \|UV^\top - M^\star\|_F^2 = \|W^\star - M^\star\|_F^2
                              > 0.
\end{equation}
\end{theorem}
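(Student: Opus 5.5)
The proof has two parts. First, I characterize when the minimizer of $\Phi$ given by \cref{lem:spectral-solution} coincides with $M^\star$. Second, I reduce the ASL objective \eqref{eq:asl} to $\Phi$, so that every ASL minimizer has product exactly $W^\star$.

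\textbf{Step 1 (uniqueness of $W^\star$ over all matrices).} $\Phi$ is strictly convex on $\mathbb{R}^{m\times n}$:
\begin{itemize}
\item $\tfrac14\|W-2M^\star\|_F^2$ is strictly convex;
\item $\tfrac{1}{4k}\|W\|_\star^2$ is convex, being the square of a nonnegative convex function.
\end{itemize}
So $\Phi$ has a unique global minimizer. By \cref{lem:spectral-solution}, the aligned matrix $P\,\mathrm{diag}(w)\,Q^\top$, with $w_i=\max(0,2\sigma_i-\lambda)$ and $\lambda=\tfrac1k\sum_j w_j$, is a minimizer, so it is $W^\star$. Since $P,Q$ have orthonormal columns, $W^\star=M^\star$ holds if and only if $w_i=\sigma_i$ for every $i$.

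\textbf{Step 2 (the iff).} Suppose $w_i=\sigma_i$ for all $i$. Since $\sigma_i>0$, the max is attained by its second argument, so $\sigma_i=2\sigma_i-\lambda$, i.e.\ $\sigma_i=\lambda$ for every $i$. Hence all singular values are equal.

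Conversely, suppose $\sigma_1=\cdots=\sigma_k=\sigma$. Take $w_i=\sigma$ and $\lambda=\sigma$. Then $\max(0,2\sigma-\sigma)=\sigma$, and the consistency condition $\lambda=\tfrac1k\sum_j w_j$ holds. So $w=\sigma\mathbf{1}$ satisfies the KKT system of the strictly convex reduced problem. That system characterizes its unique minimizer, so $W^\star=M^\star$.

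\textbf{Step 3 (ASL minimizers have product $W^\star$).}
\begin{itemize}
\item By \cref{lem:masks-equivalence}, the ASL objective $\mathcal{L}_1$ is a positive affine transform of $\mathcal{L}_2$, so they share minimizers.
\item $\mathcal{L}_2$ is the uniform average over all $S\subseteq[k]$, which equals $\mathbb{E}_z\|U\Pi_zV^\top-M^\star\|_F^2$ with i.i.d.\ Bernoulli$(1/2)$ masks.
\item By \cref{lem:dropout-expansion}, this equals $\tfrac14\|UV^\top-2M^\star\|_F^2+\tfrac14\sum_j\|u_j\|^2\|v_j\|^2$.
\end{itemize}
Now minimize in two stages: first over all factorizations with a fixed product $UV^\top=W$, then over $W$. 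Every $W$ with $\operatorname{rank} W\le k=\min(m,n)$ is reachable, and all $m\times n$ matrices have rank at most $k$. By \cref{lem:balanced-penalty}, the inner minimum equals $\Phi(W)$. Hence $\min_{U,V}\mathcal{L}_2=\min_W\Phi(W)$.

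Let $(U,V)$ be a global ASL minimizer and set $W=UV^\top$. Then
\[
\Phi(W)\le \mathcal{L}_2(U,V)=\min_{W'}\Phi(W').
\]
So $W$ minimizes $\Phi$, and by Step 1, $W=W^\star$. Under strictly decreasing $\sigma_i$, Step 2 gives $W^\star\neq M^\star$, and therefore $\|UV^\top-M^\star\|_F^2=\|W^\star-M^\star\|_F^2>0$.

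\textbf{Main obstacle.} The delicate point is the two-stage minimization in Step 3. It must be justified that every admissible $W$ is realizable by some factorization, and that the inner infimum in \cref{lem:balanced-penalty} is attained, which holds by that lemma's construction. I also must not rely on the alignment claim in \cref{lem:spectral-solution} alone for uniqueness. This is why Step 1 invokes strict convexity of $\Phi$ on the full matrix space rather than only on the aligned family.
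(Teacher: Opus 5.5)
Your proof is correct, and its core is the same argument the paper gives: you read $w_i=\sigma_i$ off the aligned form and use the fixed-point relation $w_i=\max(0,2\sigma_i-\lambda)$, $\lambda=\tfrac1k\sum_j w_j$, in both directions. You also supply two points the paper only asserts. First, you derive uniqueness of $W^\star$ over all of $\mathbb{R}^{m\times n}$ from strict convexity of $\Phi$ itself, whereas the paper's lemma only proves strict convexity of the reduced problem in the singular values. Second, you show explicitly, via \cref{lem:masks-equivalence,lem:dropout-expansion,lem:balanced-penalty} and the bound $\mathcal{L}_2(U,V)\ge\Phi(UV^\top)$ (with equality attained by the balanced factorization), that every ASL minimizer satisfies $UV^\top=W^\star$; the paper's proof takes this for granted.
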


\begin{proof}
By \cref{lem:spectral-solution}, the objective $\Phi$ admits a unique minimizer
$W^\star$, whose singular subspaces align with those of $M^\star$, and whose singular
values $w_1,\dots,w_k$ satisfy
\begin{equation}\label{eq:proof-shrinkage}
    w_i = \max\!\left(0,\,2\sigma_i-\lambda\right),
    \qquad
    \lambda = \frac{1}{k}\sum_{j=1}^k w_j ,
    \quad i = 1,\dots,k .
\end{equation}
Suppose that $W^\star = M^\star$.
Then $w_i = \sigma_i$ for all $i$, and \eqref{eq:proof-shrinkage} implies
\begin{equation}\label{eq:equal-sigmas}
    \sigma_i = 2\sigma_i - \lambda,
    \qquad i = 1,\dots,k .
\end{equation}
Hence $\sigma_i = \lambda$ for all $i$, which yields
$\sigma_1 = \cdots = \sigma_k$. Conversely, if $\sigma_1 = \cdots = \sigma_k$, then setting $w_i = \sigma_i$ satisfies \eqref{eq:proof-shrinkage}.
By uniqueness of the minimizer of $\Phi$, this implies $W^\star = M^\star$.

Under the standing assumption $\sigma_1 > \cdots > \sigma_k$,
the equality $W^\star = M^\star$ is therefore impossible.
Since $W^\star$ is unique, it follows that:
$\|W^\star - M^\star\|_F^2 > 0$, and consequently
\begin{equation*}
    \|UV^\top - M^\star\|_F^2 = \|W^\star - M^\star\|_F^2 > 0
\end{equation*}
for every global minimizer $(U,V)$ of the objective in \cref{eq:asl}.
\end{proof}

\subsection{Proofs of Main Theorems}

\subsubsection{Proof of \texorpdfstring{\cref{thm:pts}}{Theorem 4.1}: (PTS has nongeneric rank reduction)}
\label{proof:thm:pts}
We establish the theorem by characterizing the algebraic structure of the set of global minimizers $\mathcal{M}$ and evaluating the measure of the subset permitting submodel recovery.

\textbf{Case 1: Perfect recovery of the full solution ($r=k$).} 
By \cref{assumption:gd_prop_con_optimum}, any global minimizer satisfies the zero-loss condition $UV^\top = M^\star$. Since the target $M^\star$ is rank-$k$, it follows that $M^\star = A_k$. Substituting this into the definition of the submodel gap yields:
\begin{equation}
    \mathcal{E}(U,V,k) = \|UV^\top - A_k\|_{F}^2 = \|M^\star - M^\star\|_{F}^2 = 0. \label{eq:pts_full_recovery}
\end{equation}

\textbf{Case 2: Nongenericity of the optimal solution for $r < k$.}
Any global minimizer $(U,V) \in \mathcal{M}$ can be parameterized via a gauge transformation $R \in \mathrm{GL}(k)$ relative to the SVD factors $M^\star = P \Sigma Q^\top$:
\begin{equation}
    U = P \Sigma^{1/2} R, \quad V = Q \Sigma^{1/2} R^{-\top}. \label{eq:pts_gauge_mapping}
\end{equation}
For a fixed $r < k$, the condition $\mathcal{E}(U,V,r) = 0$ implies there exists a subset $S_r \subseteq [k]$ with $|S_r| = r$ such that $U \Pi_{S_r} V^\top = A_r$. Substituting the parameterization from \eqref{eq:pts_gauge_mapping}, we obtain:
\begin{align}
    P \Sigma^{1/2} R \Pi_{S_r} R^{-1} \Sigma^{1/2} Q^\top &= P \Sigma^{1/2} \Pi_{[r]} \Sigma^{1/2} Q^\top \nonumber \\
    \iff R \Pi_{S_r} &= \Pi_{[r]} R. \label{eq:pts_commute_final}
\end{align}
Let $C_\pi$ be the permutation matrix mapping the indices in $S_r$ to the first $r$ integers. The commutation relation \eqref{eq:pts_commute_final} forces $R C_\pi$ to possess a specific block-diagonal structure:
\begin{equation}
    R C_\pi = \begin{pmatrix} R_{11} & 0 \\ 0 & R_{22} \end{pmatrix}, \quad R_{11} \in \mathrm{GL}(r), \quad R_{22} \in \mathrm{GL}(k-r). \label{eq:pts_block_diagonal}
\end{equation}

Let $\mathcal{H}_{S_r} \subset \mathrm{GL}(k)$ be the set of matrices satisfying \eqref{eq:pts_block_diagonal}. Since $\mathrm{GL}(k)$ is an open subset of the vector space $\mathbb{R}^{k^2}$, we can evaluate the measure of $\mathcal{H}_{S_r}$ by its codimension:
\begin{align}
    \mathrm{codim}(\mathcal{H}_{S_r}) &= \dim(\mathbb{R}^{k^2}) - \dim(\mathcal{H}_{S_r}) \nonumber \\
    &= k^2 - \left( r^2 + (k-r)^2 \right) = 2r(k-r). \label{eq:pts_codim_calc}
\end{align}
For any $1 \le r < k$, the codimension is at least $2$. As a proper lower-dimensional subset of $\mathbb{R}^{k^2}$, $\mathcal{H}_{S_r}$ has Lebesgue measure zero. Under \cref{assumption:gd_prop_random_init}, the gauge $R$ is determined by a random initialization with an absolutely continuous distribution, which implies $\mathbb{P}[R \in \mathcal{H}_{S_r}] = 0$ for any specific subset $S_r$. 

To show this holds for all possible submodels, we apply the union bound over the finite collection of all subsets $\mathcal{S}_r = \{ S \subseteq [k] : |S| = r \}$:
\begin{equation}
    \mathbb{P} [ \mathcal{E}(U,V,r) = 0 ] = \mathbb{P} \left[ \bigcup_{S \in \mathcal{S}_r} R \in \mathcal{H}_S \right] \le \sum_{S \in \mathcal{S}_r} \mathbb{P} [ R \in \mathcal{H}_S ] = 0. \label{eq:pts_union_subsets}
\end{equation}
Finally, applying the union bound over all truncation levels $r \in \{1, \dots, k-1\}$ yields:
\begin{equation}
    \mathbb{P} \left[ \exists r < k : \mathcal{E}(U,V,r) = 0 \right] \le \sum_{r=1}^{k-1} \mathbb{P} [ \mathcal{E}(U,V,r) = 0 ] = 0. \label{eq:pts_total_union}
\end{equation}
Thus, for a global minimizer $(U,V)$ reached via GD, the condition $\mathcal{E}(U,V,r) > 0$ for all $r < k$ holds with probability 1.

\subsubsection{Proof of \texorpdfstring{\cref{thm:asl_main}}{Theorem 4.2}: (ASL has strictly positive submodel gap)}
\label{proof:thm:asl}
Consider the objective in \cref{eq:asl}, by \cref{lem:masks-equivalence} its optimal solution coincide with the one of the objective function over all the masks (\ie{} including the empty mask).
In particular, by \cref{lem:dropout-expansion,lem:balanced-penalty} minimizing the ASL objective is equivalent to minimize:
\begin{equation}
    \Phi(W)
    =
    \frac{1}{4}\|W-2M^\star\|_F^2
    +
    \frac{1}{4k}\|W\|_\star^2 .
\end{equation}
Let $W^\star=P\mathrm{diag}(\mathbf{w})Q^\top$ the SVD decomposition of the optimal solution 
given by \cref{thm:alg2-interference}, where $\mathbf{w} \in \mathbb{R}^k$ are the eigenvalues and $P,Q$ are orthonormal matrices.
Define the dual test matrix $G := P Q^\top$. By orthonormality of $P$ and $Q$,
\begin{equation}\label{eq:G_norms}
    \|G\|_F = \sqrt{k},
    \qquad
    \|G\|_{\mathrm{op}} = 1 .
\end{equation}
By \cref{lem:spectral-solution}, $W^\star$ shares singular subspaces with $M^\star$. Consequently, for the rank-$r$ truncation $A_r$ of $M^\star$,
\begin{equation}\label{eq:G_align}
    \langle G, A_r \rangle = \sum_{i=1}^{r} \sigma_i,
    \qquad
    \langle G, W^\star \rangle = \sum_{i=1}^{k} w_i = \|W^\star\|_\star .
\end{equation}
Let $W^\star = \sum_{j=1}^k u_j v_j^\top$ be any optimal factorization.
By \cref{lem:spectral-solution},
\begin{equation}\label{eq:balanced_weight}
    \|u_j\|_2\,\|v_j\|_2 = \frac{1}{k}\|W^\star\|_\star = \lambda,
    \qquad j = 1,\dots,k .
\end{equation}
Using duality between operator and nuclear norms together with \eqref{eq:G_norms}:
\begin{equation}\label{eq:duality_bound}
    \langle G, u_j v_j^\top \rangle
    \le
    \|G\|_{\mathrm{op}} \|u_j v_j^\top\|_\star
    = \lambda .
\end{equation}
Since $\sum_{j=1}^k \langle G, u_j v_j^\top \rangle =\langle G, W^\star \rangle = k\lambda$, each inequality in \eqref{eq:duality_bound} must be tight.
Therefore:
\begin{equation}\label{eq:submodel_proj}
    \langle G, U \Pi_S V^\top \rangle
    =
    r\lambda
\end{equation}
for every subset $S \subset [k]$ with $|S| = r$.
By the Cauchy-Schwarz inequality, the Frobenius distance is bounded by its projection onto the direction of $G$:
\begin{align}\label{eq:cs_bound}
    \|U \Pi_S V^\top - A_r\|_F
    &\ge \frac{|\langle G, U \Pi_S V^\top - A_r \rangle|}{\|G\|_F} \nonumber \\
    &= \frac{|r\lambda - \sum_{i=1}^r \sigma_i|}{\sqrt{k}} .
\end{align}
Squaring yields the stated lower bound on $\mathcal{E}(U,V,r)$.
To prove that $\mathcal{E}(U,V,r) > 0$ generically, define for each
$r = 1,\dots,k$ the linear functional $f_r$ on the spectrum
$\boldsymbol{\sigma} = (\sigma_1,\dots,\sigma_k) \in \mathbb{R}^k$ by
\begin{equation}\label{eq:functional}
    f_r(\boldsymbol{\sigma}):= r\lambda - \sum_{i=1}^r \sigma_i
    = \left(\frac{r-k}{k}\right)\sum_{i=1}^r \sigma_i
    + \frac{r}{k}\sum_{i=r+1}^k \sigma_i .
\end{equation}
For each $r$, the zero set:
\begin{equation}
    \mathcal{Z}_r := \{\boldsymbol{\sigma} \in \mathbb{R}^k : f_r(\boldsymbol{\sigma}) = 0\}
\end{equation}
is a proper linear subspace of codimension one. By basic properties of Lebesgue measure, each $\mathcal{Z}_r$ has measure zero.
Finally, applying the union bound over all submodel sizes $r=1,\dots,k$, we obtain:
\begin{equation}\label{eq:prob_zero}
    \mathbb{P}\left[\bigcup_{r=1}^{k} \mathcal{Z}_r\right]
    \le
    \sum_{r=1}^{k} \mathbb{P}\!\left[ \boldsymbol{\sigma} \in \mathcal{Z}_r \right]
    = 0.
\end{equation}
Thus, for any absolutely continuous distribution on $\mathbb{R}^k$,
the gap condition $\mathcal{E}(U,V,r) > 0$ holds simultaneously for all
$r=1,\dots,k$ with probability~$1$.

\begin{corollary}
\label{corollary:train_all}
    Training all possible submodels can lead to suboptimal solution $\hat{U}, \hat{V}$, \ie{}:
    \[
    (\hat{U}, \hat{V}) \notin \mathcal{M}:=\{(U,V)\in\mathbb{R}^{m\times k}\times\mathbb{R}^{n\times k}:\ UV^\top=M^\star\}.
    \]
\end{corollary}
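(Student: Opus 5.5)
This is essentially a direct consequence of \cref{thm:alg2-interference}, and I would prove it by reducing the ASL objective to $\Phi$. Let $(\hat U,\hat V)$ be any global minimizer of the ASL objective \eqref{eq:asl}. The plan is to show $\hat U\hat V^\top\neq M^\star$ whenever the singular values of $M^\star$ are not all equal, which holds under the standing assumption $\sigma_1>\cdots>\sigma_k$.

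\textbf{Step 1: reduce to the averaged mask objective.} By \cref{lem:masks-equivalence}, the ASL objective has the same minimizers as the objective averaged over all $2^k$ masks, empty mask included. That averaged objective equals the Bernoulli$(1/2)$ rank-dropout expectation of \cref{lem:dropout-expansion}:
\[
\tfrac14\|UV^\top-2M^\star\|_F^2+\tfrac14\sum_j\|u_j\|_2^2\|v_j\|_2^2 .
\]

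\textbf{Step 2: recast as a problem over $W$.} Minimize the expression from Step 1 first over all factorizations with $UV^\top=W$ fixed, then over $W$. By \cref{lem:balanced-penalty}, the inner minimum of the penalty term is $\tfrac1k\|W\|_\star^2$. Hence $(\hat U,\hat V)$ minimizes ASL if and only if $\hat W=\hat U\hat V^\top$ minimizes $\Phi$ and the factorization is balanced. It follows that $\hat W=W^\star$, the unique minimizer from \cref{lem:spectral-solution}. The one point needing care is that the infimum over $W$ is attained and that the joint minimizers project exactly onto minimizers of $\Phi$. This is routine: $\Phi$ is strictly convex and coercive, and \cref{lem:balanced-penalty} says the inner minimum is achieved.

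\textbf{Step 3: conclude.} By \cref{thm:alg2-interference}, $W^\star=M^\star$ holds iff all $\sigma_i$ are equal. Under $\sigma_1>\sigma_2$ this fails, so $\hat U\hat V^\top=W^\star\neq M^\star$, i.e.\ $(\hat U,\hat V)\notin\mathcal{M}$. To make this concrete I would add a small example: take $k=2$ with $\sigma_1>\sigma_2$. If both $w_i>0$, then $w_i=2\sigma_i-\lambda$ with $\lambda=(w_1+w_2)/2$, which gives $\lambda=(\sigma_1+\sigma_2)/2$, $w_1=\tfrac{3\sigma_1-\sigma_2}{2}\neq\sigma_1$ and $w_2=\tfrac{3\sigma_2-\sigma_1}{2}$. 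This shows the interference explicitly: the top direction is inflated and the bottom one shrunk, or zeroed if $3\sigma_2\le\sigma_1$.

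I expect the main obstacle to be Step 2. Justifying that the $W$-reduction preserves minimizers in both directions, rather than only matching optimal values, is the only nontrivial point. Everything else follows from the cited lemmas.
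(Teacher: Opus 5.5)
Your argument is correct, but it is not the route the paper takes for this corollary.

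The paper gives a hands-on $k=2$ computation on the three-mask ASL objective. The Pareto-recovering factorization ($u_1v_1^\top=A_1$, $u_2v_2^\top=A_2-A_1$) scores $\sigma_1^2+\sigma_2^2$. Collapsing both columns onto the top direction ($u_1v_1^\top=u_2v_2^\top=\tfrac23A_1$) scores $3\sigma_2^2+\sigma_1^2/3$, which is smaller once $\sigma_1^2>3\sigma_2^2$. This is elementary and shows the interference mechanism concretely. However, it needs a spectral-gap condition and is specific to $k=2$. Strictly, it also only shows that one particular element of $\mathcal{M}$ is beaten. Balanced factorizations of $M^\star$, which also lie in $\mathcal{M}$, score $\tfrac12(\sigma_1+\sigma_2)^2$. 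For $\sigma_2/\sigma_1$ just below $1/\sqrt3$, this is lower than the collapsed configuration's value.

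You instead reduce ASL to $\Phi$ through \cref{lem:masks-equivalence,lem:dropout-expansion,lem:balanced-penalty}. You then use uniqueness in \cref{lem:spectral-solution} together with \cref{thm:alg2-interference}. This shows that \emph{every} ASL minimizer satisfies $\hat U\hat V^\top=W^\star\neq M^\star$, for any $k$ and any spectrum whose singular values are not all equal. That is exactly the membership claim in the statement; the price is reliance on the heavier lemma chain (Schur--Horn, KKT).

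The justification you sketch for Step 2 goes through. Since $\mathcal{L}_2(U,V)\ge\Phi(UV^\top)$, with equality at a balanced factorization of $W^\star$, any ASL minimizer has $\Phi(\hat U\hat V^\top)\le\Phi(W^\star)$. Strict convexity of $\Phi$ then forces $\hat U\hat V^\top=W^\star$, so the reduction preserves minimizers and not just optimal values.

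The two arguments also meet. For $\sigma_1\ge3\sigma_2$, your $k=2$ formula gives $w_2=0$ and $w_1=\tfrac43\sigma_1$, i.e.\ $W^\star=\tfrac43A_1$. This is exactly $UV^\top$ for the paper's $c=2/3$ configuration.
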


\begin{proof}[Proof sketch]
For ease of exposition, we consider the smaller setup, where $A \in \mathbb{R}^{2 \times 2}$. If we train all the models together, the optimization problem is
\begin{align}
    \label{eq:elastic_rank_problem_proof} 
    \min_{U, V} \frac{1}{3} \Big(&\|U_{\{1\}}V_{\{1\}}^\top - A\|_F^2 + \|U_{\{2\}}V_{\{2\}}^\top - A\|_F^2  \notag\\  
    &\quad +  \|U_{\{1,2\}}V_{\{1,2\}}^\top - A\|_F^2\Big).
\end{align}

If we recover the optimal Pareto front, it has to be the case that $U_{\{1\}} V^\top_{\{1\}} = A_1$ and $U_{\{2\}} V^\top_{\{2\}} = A_2 - A_1$ (indices could be flipped). Plugging this back to \eqref{eq:elastic_rank_problem_proof}, the final objective value is equal to $\sigma_2^2 + \sigma_1^2 + 0$, where $\sigma_1 \geq \sigma_2$ are eigenvalues of A. On the other hand, if we have $U_{\{1\}} V^\top_{\{1\}} = U_{\{2\}} V^\top_{\{2\}} = cA_1$, then the objective value is equal to $3\sigma_2^2 + (2(1-c)^2 + (1 -2c)^2)\sigma_1^2$. If $c=2/3$, then the objective is equal to  $3\sigma_2^2 + \sigma_1^2/3$. Therefore, if $\sigma_1^2 > 3\sigma_2^2$, the second solution is better than the first one, and we do not recover the optimal Pareto front, which concludes the proof.
\end{proof}

\subsubsection{Proof of \texorpdfstring{\cref{thm:nsl}}{Theorem 4.3}: (NSL preserves nested minimizers)}
\label{proof:thm:nsl_main}
Let $(U,V) \in \mathcal{M}_{\text{NSL}}$. By the Eckart--Young--Mirsky theorem, for each $r \in \{1,\dots,k\}$, the $r$-rank matrix $U\Pi_{[r]}$ satisfies the lower bound $\|U\Pi_{[r]}V^\top - M^\star\|_F^2 \ge \|A_r - M^\star\|_F^2$. Since the SVD factors of $M^\star$ achieve all $k$ lower bounds simultaneously, any global minimizer $(U,V)$ must satisfy:
\begin{equation}\label{eq:optimality-condition}
    U\Pi_{[r]}V^\top = A_r \quad \text{for all } r \in \{1, \dots, k\}.
\end{equation}
We claim that the NSL objective enforces the structural constrains in \eqref{eq:optimality-condition}. We proceed by induction.

\textbf{Base case ($r=1$).}
For $r=1$, the condition \eqref{eq:optimality-condition} requires:
\begin{equation}\label{eq:base-case-final}
    U\Pi_{[1]}V^\top =A_1 \implies u_1 v_1^\top = \sigma_1 p_1 q_1^\top.
\end{equation}

\textbf{Inductive step.}
Assume that for some $r \in \{2, \dots, k\}$, the inductive hypothesis $U \Pi_{[r-1]}V^\top = A_{r-1}$ holds. By the global optimality condition \eqref{eq:optimality-condition}, the $r$-th submodel must satisfy:
\begin{equation}\label{eq:r-optimality}
    U\Pi_{[r]}V^\top = A_r .
\end{equation}
Recalling the recursive definition of the submodels $U\Pi_{[r]}V^\top = U\Pi_{[r-1]}V^\top + u_r v_r^\top$, we substitute the inductive hypothesis into \eqref{eq:r-optimality}:
\begin{align}
    A_{r-1} + u_r v_r^\top &= A_r \nonumber \\
    \implies u_r v_r^\top  &= A_r - A_{r-1} \nonumber \\
                           &= \sigma_r p_r q_r^\top \label{eq:telescope_svd}
\end{align}
where the final equality follows from the recursive structure of the truncated SVD. Then by induction we have that the relation in \eqref{eq:base-case-final} holds for any $r$. Consequently, it holds that $\mathcal{E}(U,V,r)=0 \;\forall r \in \{1, \dots, k\}$, which concludes the proof.
\section{Additional Details}

\subsection{Layer Decomposition}
\label{app:layer_decomposition}

Directly solving \eqref{eq:datasvd_obj} for a large sample of activations is memory-prohibitive, as storing $\mathbf{X}_l$ scales with $\mathcal{O}(N \cdot n_l)$. We instead utilize an efficient variant based on the second moment of the activations. Observing that $\| A \mathbf{X} \|_F^2 = \text{Tr}(A \mathbf{X} \mathbf{X}^\top A^\top)$, we rewrite the objective as
\begin{align}
    \label{eq:datasvd_trace}
    \| (\theta_l - U_l V_l^\top) \mathbf{X}_l \|_F^2 &= \text{Tr} \left[ \Delta \theta_l \Sigma_l \Delta \theta_l^\top \right] \nonumber \\
    &= \| (\theta_l - U_l V_l^\top) \Sigma_l^{1/2} \|_F^2
\end{align}
where $\Delta \theta_l = (\Delta \theta_l - U_l {V_l}^\top)$ and $\Sigma_l = \mathbf{X}_l \mathbf{X}_l^\top \in \mathbb{R}^{n_l \times n_l}$ is the unnormalized covariance matrix. This enables a two-stage initialization

\textbf{1. Online Covariance Estimation:} We batch-accumulate $\Sigma_l = \sum_{j} \mathbf{x}_{l,j} \mathbf{x}_{l,j}^\top$ by running batches of activations through the model. Note that the memory complexity is now independent of $N$ and scales as $\mathcal{O}(n_l^2)$.

\textbf{2. Whitened SVD:} We compute the symmetric square root $\Sigma_l^{1/2}$ via eigen-decomposition and perform SVD on the ``whitened'' weights $\tilde{\theta}_l = \theta_l \Sigma_l^{1/2}$, yielding $\tilde{\theta}_l = P_L \Lambda_L Q_l^\top$. To recover the factors in the original space, we observe that $\theta_l = (P_l \Lambda_l Q_l^\top) \Sigma_l^{-1/2}$. We then initialize the shared factors by symmetrically absorbing the singular values $\Lambda_l$
\begin{equation}
    U_l \leftarrow P_l \Lambda_l^{1/2}, \quad V_l \leftarrow \Sigma_l^{-1/2} Q_l \Lambda_l^{1/2}.
\end{equation}
This data-aware initialization aligns the rank-reduction process with the most salient directions of the feature space, providing a superior starting point for elastic training.

\subsection{Identifying Pareto Front}
\label{app:pareto_front}

To solve \cref{eq:optimal_m_selection} efficiently, we employ a two-step procedure:

\textbf{1. Layer Probing:} We evaluate the model's sensitivity to rank reduction at each layer independently. For each layer $l \in \{1, \dots, L\}$ and each budget $\beta_k \in \tilde{\mathcal{B}}$, we instantiate a model where only the $l$-th layer is transformed by $\mathcal{T}_{\beta_k}$ while all other layers remain at full capacity. We record the resulting performance $\mathcal{R}_{l,k}$, constructing a sensitivity matrix $\mathbf{S} \in \mathbb{R}^{L \times K}$.

\textbf{2. Dynamic Programming Exploration:} Using the sensitivity matrix $\mathbf{S}$, we solve for the entire set of optimal configurations $\mathcal{M}$ simultaneously by framing the search as a Multi-Choice Knapsack Problem (MCKP). We employ a Dynamic Programming (DP) algorithm to find the rank assignments across layers that maximize the aggregate performance for each global threshold $\beta_k$. While DP provides an exact solution under the assumption that errors are additive across layers—a simplification of the non-linear dependencies in deep networks—our objective is not absolute optimality during probing, but rather the identification of configurations that satisfy the ranking consistency.
Moreover, this algorithm is suitable for the problem in \cref{eq:optimal_m_selection} since the nested constraint $\mathbf{m}_{k-1} \preceq \mathbf{m}_k$ can be enforced.
The resulting procedure is summarized in \cref{alg:dp_rank_selection,alg:dp_subroutines}: the DP expands all layer-wise rank-drop candidates, keeps only the minimum-error state for each total saving, Pareto-prunes dominated states, reconstructs configurations through backpointers, and finally retains a componentwise-nested chain.

\begin{algorithm}[h]
\caption{\tool dynamic programming (DP) rank selection (see subroutines in \cref{alg:dp_subroutines}}
\label{alg:dp_rank_selection}
\small
\begin{algorithmic}[1]
\REQUIRE Layer candidates $\{\mathcal{C}_\ell\}_{\ell=1}^L$, where $\mathcal{C}_\ell=\{(s_{\ell j},e_{\ell j})\}_j$ contains rank savings and reconstruction errors
\ENSURE Componentwise-nested Pareto configurations $\mathcal{P}$
\STATE $\mathcal{F}\gets\{(0,0)\}$ \COMMENT{frontier: total saving, total error}
\STATE $\mathcal{B}\gets[\,]$ \COMMENT{backpointers}
\FOR{$\ell=1,\ldots,L$}
    \STATE $\mathcal{A}\gets\DPCall{ExpandLayer}{\mathcal{F},\mathcal{C}_\ell}$
    \STATE $\mathcal{A}\gets\DPCall{KeepMinErrorPerSaving}{\mathcal{A}}$
    \STATE $(\mathcal{F},B_\ell)\gets\DPCall{ParetoPrune}{\mathcal{A}}$
    \STATE append $B_\ell$ to $\mathcal{B}$
\ENDFOR
\STATE $\mathcal{P}\gets\DPCall{Backtrack}{\mathcal{F},\mathcal{B}}$
\STATE $\mathcal{P}\gets\DPCall{ParetoFilter}{\mathcal{P}}$
\STATE $\mathcal{P}\gets\DPCall{NestedChain}{\mathcal{P}}$
\DPReturn{\mathcal{P}}
\end{algorithmic}
\end{algorithm}
\begin{algorithm*}[t]
\caption{Subroutines for \cref{alg:dp_rank_selection}}
\label{alg:dp_subroutines}
\small
\begin{minipage}[t]{0.48\textwidth}
\begin{algorithmic}[1]
\DPFunction{ExpandLayer}{\mathcal{F},\mathcal{C}}
\STATE $\mathcal{A}\gets\emptyset$
\FORALL{$i=1,\ldots,|\mathcal{F}|$}
    \STATE let $(S_i,E_i)$ be the $i$-th state in $\mathcal{F}$
    \FORALL{$(s,e)\in\mathcal{C}$}
        \STATE add $(S_i+s,E_i+e,i,s)$ to $\mathcal{A}$
    \ENDFOR
    \STATE add $(S_i,E_i,i,0)$ to $\mathcal{A}$ \COMMENT{no saving}
\ENDFOR
\DPReturn{\mathcal{A}}
\DPEndFunction
\end{algorithmic}
\end{minipage}
\hfill
\vrule
\hfill
\begin{minipage}[t]{0.48\textwidth}
\begin{algorithmic}[1]
\DPFunction{KeepMinErrorPerSaving}{\mathcal{A}}
\STATE $\mathcal{A}'\gets\emptyset$
\FORALL{unique total savings $S$ in $\mathcal{A}$}
    \STATE add to $\mathcal{A}'$ the candidate with saving $S$ and minimum error
\ENDFOR
\DPReturn{\mathcal{A}'}
\DPEndFunction
\end{algorithmic}
\end{minipage}

\vspace{0.12cm}
\hrule
\vspace{0.12cm}

\begin{minipage}[t]{0.48\textwidth}
\begin{algorithmic}[1]
\DPFunction{ParetoPrune}{\mathcal{A}}
\STATE sort $\mathcal{A}$ by increasing total saving
\STATE $\mathcal{F}\gets[\,]$, $B\gets[\,]$, $E_{\mathrm{best}}\gets\infty$
\FORALL{candidate $(S,E,i,s)$ in $\mathcal{A}$ scanned from largest to smallest $S$}
    \IF{$E<E_{\mathrm{best}}$}
        \STATE prepend $(S,E)$ to $\mathcal{F}$
        \STATE prepend $(i,s)$ to $B$
        \STATE $E_{\mathrm{best}}\gets E$
    \ENDIF
\ENDFOR
\DPReturn{\mathcal{F},B}
\DPEndFunction
\end{algorithmic}
\end{minipage}
\hfill
\vrule
\hfill
\begin{minipage}[t]{0.48\textwidth}
\begin{algorithmic}[1]
\DPFunction{Backtrack}{\mathcal{F},\mathcal{B}}
\STATE $\mathcal{P}\gets\emptyset$
\FORALL{final state index $i=1,\ldots,|\mathcal{F}|$}
    \STATE let $(S,E)$ be state $i$ in $\mathcal{F}$ and set $h\gets i$
    \STATE initialize $d\gets(0,\ldots,0)\in\mathbb{Z}^L$
    \FOR{$\ell=L,\ldots,1$}
        \STATE let $(h',s)$ be the $h$-th backpointer in $B_\ell$
        \STATE $d_\ell\gets s$ and $h\gets h'$
    \ENDFOR
    \STATE add $(E,d)$ to $\mathcal{P}$
\ENDFOR
\DPReturn{\mathcal{P}}
\DPEndFunction
\end{algorithmic}
\end{minipage}

\vspace{0.12cm}
\hrule
\vspace{0.12cm}

\begin{minipage}[t]{0.48\textwidth}
\begin{algorithmic}[1]
\DPFunction{ParetoFilter}{\mathcal{P}}
\STATE sort $\mathcal{P}$ by increasing $\sum_{\ell}d_\ell$
\STATE $\mathcal{P}'\gets[\,]$, $E_{\mathrm{best}}\gets\infty$
\FORALL{$(E,d)\in\mathcal{P}$ scanned from largest to smallest $\sum_{\ell}d_\ell$}
    \IF{$E<E_{\mathrm{best}}$}
        \STATE prepend $(E,d)$ to $\mathcal{P}'$
        \STATE $E_{\mathrm{best}}\gets E$
    \ENDIF
\ENDFOR
\DPReturn{\mathcal{P}'}
\DPEndFunction
\end{algorithmic}
\end{minipage}
\hfill
\vrule
\hfill
\begin{minipage}[t]{0.48\textwidth}
\begin{algorithmic}[1]
\DPFunction{NestedChain}{\mathcal{P}}
\STATE $\mathcal{P}'\gets[\,]$
\FORALL{$(E,d)\in\mathcal{P}$ sorted by increasing $\sum_{\ell}d_\ell$}
    \IF{$\mathcal{P}'=\emptyset$ or $d\geq d_{\mathrm{last}}$ componentwise}
        \STATE append $(E,d)$ to $\mathcal{P}'$
        \STATE $d_{\mathrm{last}}\gets d$
    \ENDIF
\ENDFOR
\DPReturn{\mathcal{P}'}
\DPEndFunction
\end{algorithmic}
\end{minipage}
\end{algorithm*}

\textbf{Complexity Analysis.} Let $C_{\text{eval}}$ denote the maximum cost of a single model evaluation. While a brute-force search requires $\mathcal{O}(K^L \cdot C_{\text{eval}})$, our approach reduces the probing cost to $\mathcal{O}(L \cdot K \cdot C_{\text{eval}})$. The subsequent DP exploration operates on the pre-computed matrix $\mathbf{S}$ with complexity $\mathcal{O}(L \cdot K)$, making the total cost of identifying the entire Pareto front linear in both the number of layers and the budget levels.

\subsection{Analysis of ranking-preservation assumption}
\label{sec:app:dp_ranking}
In this section we analyze the assumption behind the use of our dynamic programming (DP) algorithm to solve the optimal rank-assignment problem over a pretrained model's layers.
The DP assumption should not be interpreted as an exact theorem/solution: the role of the additive probe is to make the otherwise combinatorial rank-allocation problem tractable.
We begin by noting that, while the DP algorithm uses an additive probe to estimate the loss induced by rank reduction, what is needed to preserve optimal candidates is not the exact additivity, but that the probe induces a sufficiently reliable ordering over candidate solutions, which is substantially weaker. This is the intended read of the method in this paper.
The question, therefore, becomes whether this \textit{``ranking-preservation''} is reasonable in practice. Our view is that, while it cannot be tractably shown in full generality for arbitrary deep nonlinear networks, it is reasonable in the regimes we study.

\paragraph{Notation.} Call $m=(m_1,\dots, m_L)$ any submodel induced by a rank-selection over $L$ layers, and $s(m)=(s_{m_1}, \dots, s_{m_L})$, where $s_{m_l}$ is the model’s sensitivity to rank reduction at each layer independently. Define $A(m):=\sum_{l=1}^{L}s_{m_l}$the additive probe used by DP, and $F(m)$ the true joint probing loss, obtained by directly testing model . The experiment is conducted on MNIST on a search space of $10^4$ submodels (see details in \cref{sec:exp_controlled_details}).

\paragraph{Metrics.} We consider the following metrics:
\begin{itemize}[leftmargin=*, topsep=0pt, itemsep=-1ex, partopsep=1ex, parsep=1ex]
    \item $\rho \in [-1,1]$: it is the Spearman's correlation between $A(m)$ and $F(m)$; the closer to 1 the stronger the ranking agreement
    \item $\nu \in [0,1]$: it is the pairwise violation rate, measuring the ranking-inversions; a value closer to 0 indicates fewer inversions.
    \item $p \in [0,1]$: it is the fraction of cases in which the optimal model found by DP is also optimal \wrt{} $F(m)$, higher is better.
    \item $r_{avg},r_{max}$, relative to the minimum evaluation loss: it is the margin between $A(m)$ and $F(m)$ when DP does not choose the best model \wrt{} $F(m)$, lower is better.
\end{itemize}

\paragraph{Discussion.} Results presented in \cref{fig:supp:ranking_analysis} show \textbf{very high global-level rank-agreement (Fig. A), high DP success rate (Fig. B) and low regret (Fig. C)}.
In Fig. A, each point is one submodel, and the plot shows the submodel’s position in the global ranking, \ie{}, a point $(x,y)$ reads as: \textit{``this submodel is in the top $x\%$ for $A(m)$ and in the top $y\%$ for $F(m)$''}. Values near the diagonal mean that $A(m)$ and $F(m)$ give that submodel a similar global ranking. 
The Spearman coefficient $\rho=0.991$ and violation rate $\nu=3.7\%$ (tab. D) indicate very strong global monotone agreement and very low global ranking error.

In Fig. B we show that DP matches the true exact-budget winner on $p=94.1\%$ of cases over all the front of best submodels; more importantly, when it does not return the best submodel, the regret is low. In particular, Figure C shows the empirical CDF of the resulting relative regret: for any threshold on the $x$-axis, the $y$-value gives the fraction of DP failures whose regret is at most that threshold. The fact that this curve rises quickly in the low-regret region and saturates before $12\%$ shows that most failures remain close to the true exact-budget optimum. In practice, DP often finds the same solution we would have found by exhaustive exploration, and when it does not, the solution it finds is roughly equivalent in terms of loss.

\begin{figure}[ht]
    \centering
    \includegraphics[width=0.95\linewidth]{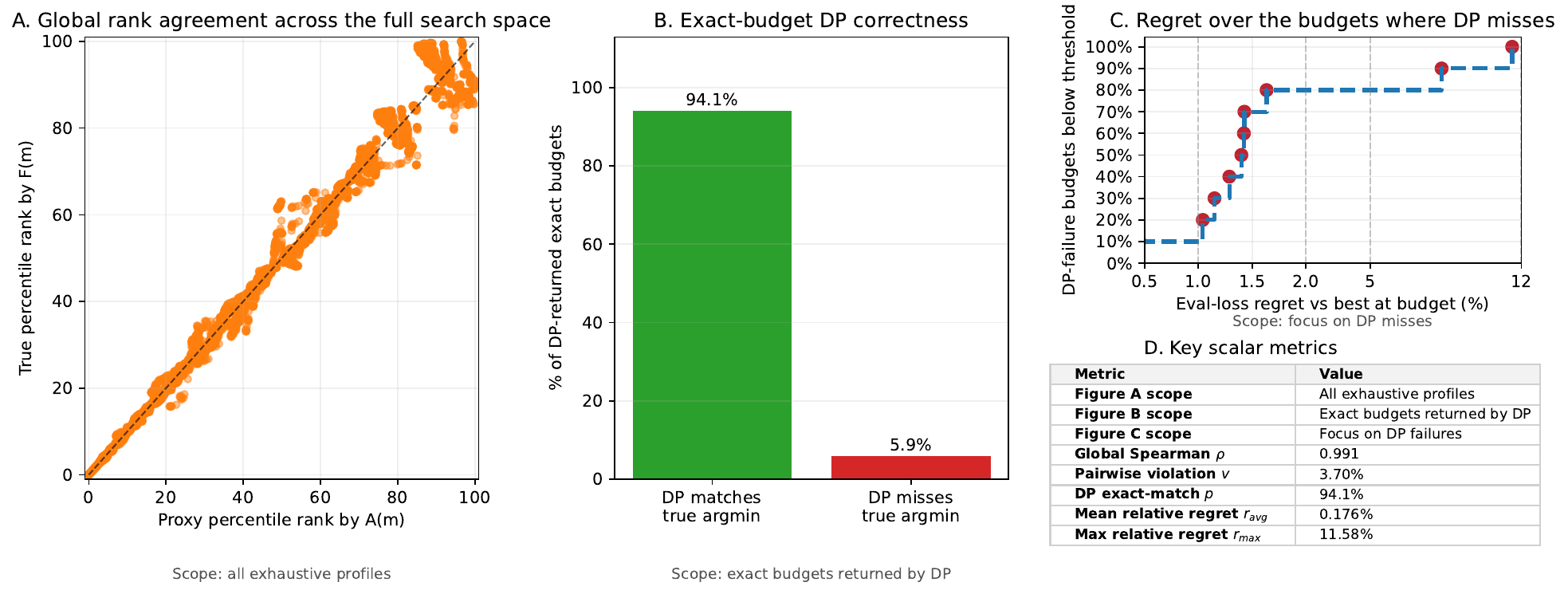}
    \caption{Analysis of the validity of ranking preservation assumption for DP submodel selection}
    \label{fig:supp:ranking_analysis}
\end{figure}
\section{Experimental Setting}

\subsection{Details on controlled experiments}
\label{sec:exp_controlled_details}
For the controlled experiment, we consider a simple neural network with two fully connected linear layers without any activation or bias. We generate data $\mathcal{D} \sim \mathcal{N}(0, \mathbf{I}_{10})$ and labels $\mathbf{y} = M^\star \mathbf{d} + E$, $\mathbf{d} \in \mathcal{D}$, where $M^\star \in R^{10 \times 10}$ is a randomly generated matrix whose singular values follow a power law with decay $1.2$, and $E \sim \mathcal{N}(0, 0.1)$ is an independent random noise. 
If we use $\ell_2$ as a loss function, the elastic training objective is exactly the same of \cref{eq:m_approx}, with $k=10$ being the hidden dimension of the neural network.

\subsection{Datasets and Models}
\label{sec:app:exp:datasets_models}

\paragraph{Training.}
The core component of \tool is nested submodel training starting from a suitable initialization (\eg{} a SVD of layer's weights). This training step involves knowledge distillation from the full model, and can be carried out using any dataset representative enough of the pretraining task. The use of a proxy dataset is in practice a requirement shared with other methods \cite{genzel2025choose, cai2024flextron, cai2025llamaflex}.
Since data quality has been proven to play an important role in language modelling, for NLP experiments we chose the FineWebEdu dataset, in particular the split composed of $10$ billion tokens.

For CV experiments, we adopt the DINOv3 models family and train on ImageNet1K. More in detail, following the protocol of \cite{siméoni2025dinov3}, we initialize the backbone with the pretrained weights publicly released and only additionally train the classification head. The resulting architecture is then treated as the input of \tool, similarly as in the NLP case.

It is important to notice that we do not finetune the backbone weights on the proxy dataset: this is the intended use of DINOv3 models on downstream tasks from the official paper, as well as their evaluation protocol. 
While finetuning all the weights can potentially lead to higher accuracy, we argue that preserving the backbone performance is closer to the intended use of DINOv3 models.
Moreover, the use of a classification head as proxy task for knowledge distillation can be in principle exchanged with feature matching or other type of losses.

\paragraph{Downstream Task Evaluation.}
For NLP models, we evaluate the zero-shot performance on commonsense datasets \verb|ARC_Challenge|, \verb|		ARC_Easy|, \verb|HellaSwag|, \verb|OpenBookQA|, \verb|PIQA|, \verb|Winogrande|, using the \texttt{lm-eval-harness} tool \citep{eval-harness}. 
Additionally, we evaluate the post-adaptation capabilities of \tool submodels by finetuning LoRA adapters separately for each of submodel we evaluate. This practice is simple enough to show that submodels retain sufficient knowledge from the pretraining task to be easily finetuned, and corresponds to the common scenario in which PEFT techniques are used.
We follow the same experimental protocol of \cite{meng2024pissa}, finetuning the adapters on MetaMathQA for math domain and on Code--Feedback for coding tasks. Then, we test the $5$-shot performances respectively on \verb|MathQA| and \verb|GSM8K|, zero-shot and on \verb|HumanEval| and 3-shot on \verb|MBPP|.

\subsection{Implementation Details}

\paragraph{Hyperparameters.}
For all NLP models used in this study, we use a global batch size of $512$ and a sequence length of $1024$, and train for $10.000$ steps, accounting for roughly half of a complete epoch ($5$B tokens). We use AdamW with standard parameters and learning rate $\eta=1e-5$ with $715$ warmup steps and cosine annealing schedule.

For pretraining the classification heads of DINOv3 models, we follow the author suggestions \cite{siméoni2025dinov3}, using SGD with momentum $\beta=0.9$ and search the learning rate $\eta \in \{0.01, 0.02, 0.05, 0.1, 0.2, 0.5, 1, 2, 5\}$ and the weight decay $\lambda \in \{0,1e-5\}$, with best found values $\eta=0.5$ and $\lambda=0$.
For all the DINOv3 models, we use a global batch size of $1024$, and train for $25.000$ steps, which correspond to about $20$ epochs. For the training step of \tool, we use AdamW with standard parameters and learning rate $\eta=1e-5$ with $715$ warmup steps and cosine annealing schedule.

\paragraph{Reparametrizing layers into $(U,V)$ form.}
\label{sec:app:exp:layer_parametrization}
This reparameterization is broadly applicable to the standard layers found in modern Deep Learning architectures. For \textbf{linear layers}, the shared parameters consist of the low-rank factors $\theta_l = \{U_l, V_l\}$ such that the weight matrix is $W_l = U_l V_l^\top$. 
For \textbf{convolutional layers} with $C_{out}$ filters, $C_{in}$ channels, and spatial dimensions $h \times h$, we apply the factorization to the reshaped weight matrix $M_l \in \mathbb{R}^{C_{out} \times (C_{in} \cdot h^2)}$. 
For \textbf{Multi-Head Attention (MHA)} modules, the implementation of $\theta_l$ follows the underlying architecture of $f$. If $f$ utilizes a fused query-key-value operator, $\theta_l$ factorizes the consolidated projection matrix; otherwise, it consists of distinct factor pairs $\{U_q, V_q\}$, $\{ U_k, V_k\}$, $\{U_v, V_v\}$.

\subsection{Training and inference efficiency}
\label{sec:app:efficiency}
\begin{figure}
    \centering
    \includegraphics[width=0.5\linewidth]{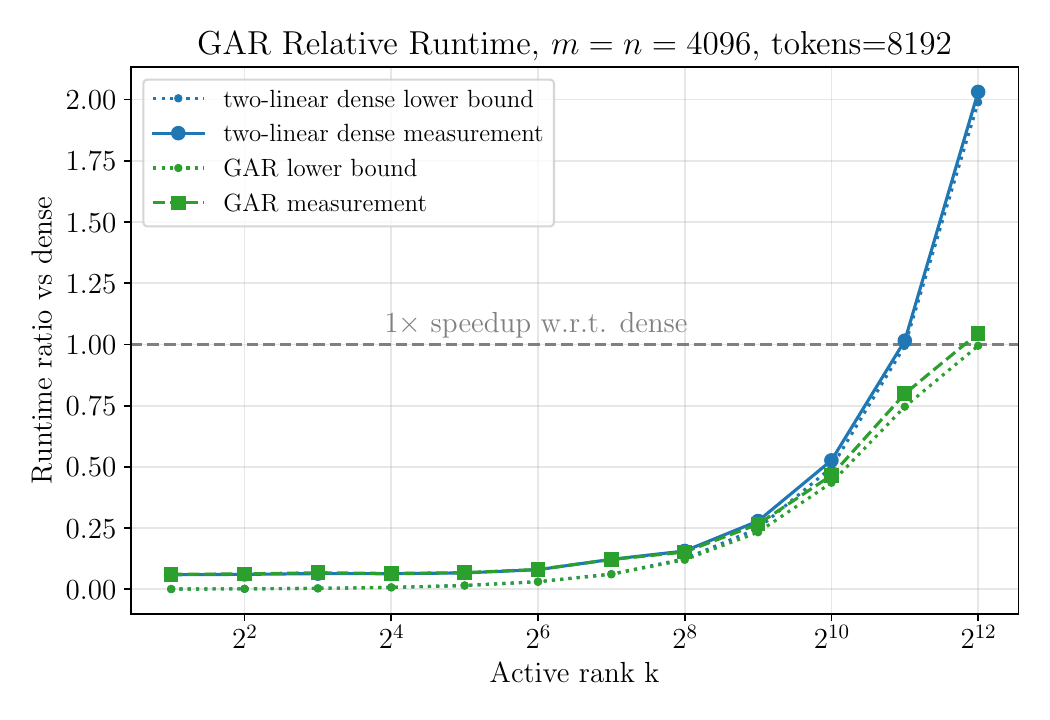}
    \caption{\textbf{GAR enables practical speedup following the theoretical prediction:} Relative cost of a forward pass of \textit{(i)} the usual (blue) and the \textsc{Gar} factorizations (\cref{sec:method:gar}) relative to the forward of a matrix $M \in \mathbb{R}^{m\times n}$, varying the active rank $k$, for $m=n=4096$ and an input of $8192$ tokens. All measurements have been profiled on one \textsc{Nvidia RTX5090} GPU with \texttt{torch.compile} activated.}
    \label{fig:gar_runtime}
\end{figure}
\paragraph{At training time.}
 Algorithmically, each FlexRank training step is comparable to standard distillation. However, the convergence is much faster thanks to the initialization from the pretrained model, so the required training is much lighter (\eg{} we uniformly use $5$B tokens for LM experiments, while the documented recipe the LLama3.2-1B reports a training budget of $9$T tokens).
 However, an under-optimized implementation can lead to a higher cost. There are two implementation-level factors currently increasing latency in our experiments: 
\begin{enumerate}[leftmargin=*, topsep=0pt, itemsep=-1ex, partopsep=1ex, parsep=1ex]
    \item \textbf{Two-model for distillation.} Training requires both the teacher and student, doubling forward-pass cost and increasing memory pressure, limiting batch size and throughput. One could disaggregate model hosting and allow for asynchronous rollouts through vllm/sglang. 
    \item \textbf{Parameterization overhead.} The factorized layers introduce additional memory movement (\eg{}, intermediate activations in $B@ (X @ A)$), making the implementation more memory-bound than dense layers when not fused. Implementing the factorized forward pass as a single fused kernel could recover the memory bottleneck.
\end{enumerate}

\paragraph{At inference time.} Many of the above problems do not occur at inference time, when the teacher is not used. Moreover, according to our measurements in \cref{fig:gar_runtime}, the practical scaling closely follows the one prescribed by theory: in particular, the standard factorization is up to $2\times$ as costly as a dense forward for high active rank, whereas our \textsc{Gar} factorization overcomes this issue, always attaining a cost reduction, even for almost full rank layers.

\end{document}